\newcommand{\norm}[1]{\left\lVert#1\right\rVert}
\newtheorem{assumption}{Assumption}
\newlength\tdima
\newcommand\tabfill[1]{%
      \setlength\tdima{\linewidth}%
      \addtolength\tdima{\@totalleftmargin}%
      \addtolength\tdima{-\dimen\@curtab}%
      \parbox[t]{\tdima}{#1\ifhmode\strut\fi}}
\begin{document}

\title{Momentum Q-learning with Finite-Sample Convergence Guarantee}

\author{\name Bowen Weng\thanks{equal contribution} \email weng.172@osu.edu \\
       \addr Department of Electrical and Computer Engineering\\
       The Ohio State University\\
       Columbus, OH 43210, USA
       \AND
       \name Huaqing Xiong$^*$ \email xiong.309@osu.edu \\
       \addr Department of Electrical and Computer Engineering\\
       The Ohio State University\\
       Columbus, OH 43210, USA
       \AND
       \name Lin Zhao$^*$ \email elezhli@nus.edu.sg \\
       \addr Department of Electrical and Computer Engineering \\
       National University of Singapore \\
       Singapore 117583, Republic of Singapore
       \AND
       \name Yingbin Liang \email liang.889@osu.edu \\
       \addr Department of Electrical and Computer Engineering\\
       The Ohio State University\\
       Columbus, OH 43210, USA
       \AND
       \name Wei Zhang \email zhangw3@sustech.edu.cn \\
       \addr Department of Mechanical and Energy Engineering\\
       Southern University of Science and Technology (SUSTech)\\
       Shenzhen, 518055, China}


\maketitle

\begin{abstract}
Existing studies indicate that momentum ideas in conventional optimization can be used to improve the performance of Q-learning algorithms. However, the finite-sample analysis for momentum-based Q-learning algorithms is only available for the tabular case without function approximations. This paper analyzes a class of momentum-based Q-learning algorithms with finite-sample guarantee. Specifically, we propose the MomentumQ algorithm, which integrates the Nesterov's and Polyak's momentum schemes, and generalizes the existing momentum-based Q-learning algorithms. 
For the infinite state-action space case, we establish the convergence guarantee for MomentumQ with linear function approximations and Markovian sampling. In particular, we characterize the finite-sample convergence rate which is provably faster than the vanilla Q-learning. This is the first finite-sample analysis for momentum-based Q-learning algorithms with function approximations. For the tabular case under synchronous sampling, we also obtain 
a finite-sample convergence rate that is slightly better than the SpeedyQ \citep{azar2011speedy} when choosing a special family of step sizes.
Finally, we demonstrate through various experiments that the proposed MomentumQ outperforms other momentum-based Q-learning algorithms. 

\end{abstract}

\begin{keywords}
  Q-learning, momentum scheme, linear function approximation, tabular Q-learning, finite-sample analysis, convergence rate.
\end{keywords}

\section{Introduction}

Reinforcement learning (RL) aims to design strategies for an agent to find a desirable policy through interacting with an environment in order to maximize an accumulative reward for a task. RL has received drastically growing attention in recent years and accomplished tremendous success in various application domains such as playing video games~\citep{mnih2013playing}, bipedal walking robot~\citep{castillo2018reinforcement}, board game~\citep{Silver2017}, to name a few. This paper focuses on Q-learning, which is a widely used model-free RL algorithm for finding the action-value function (known as the Q-function) of the optimal policy. 

Q-learning was first proposed in~\cite{watkins1992q} and has been studied extensively since then. For scenarios with a finite state-action space, the Q-function can be conveniently represented as a tabular function. The convergence of Q-learning in the tabular case was proved in~\cite{jaakkola1994convergence}. 
In the case with a continuous state-action space, one typically approximates the Q-function with a parameterized function class of a relatively small parameter dimension. Among the rich approximation classes, linear function approximation \citep{bertsekas1996neuro,sutton2018reinforcement} and neural network function approximation \citep{mnih2013playing} are often adopted in the literature. We will review these studies in more details in Section \ref{subsec:relatedWork}.

The central idea of Q-learning algorithms is to solve an optimal Bellman equation~\citep{bertsekas1996neuro} iteratively as a fixed point problem. Since the Bellman operator is expressed as the expected value over the underlying Markov decision processes (MDP) which is unknown, Q-learning (as a model-free algorithm) approximates it via its sampled version, and such an update can be viewed analogously to the first-order (stochastic) gradient descent algorithm~\citep{baird1995residual}. 
This connection thus motivated several studies on accelerating Q-learning by incorporating various momentum schemes, such as Heavy-ball (HB)~\citep{polyak1964some} and Nesterov's accelerated gradient (NAG)~\citep{nesterov2013introductory} which were shown to accelerate gradient descent in conventional optimization algorithms. For example, speedy Q-learning (SpeedyQ) proposed in~\citep{azar2011speedy} can be viewed as incorporating the NAG to Q-learning with particularly designed learning rate. \cite{devraj2019nesa} applied both HB and NAG to Q-learning with a matrix learning rate. 
\cite{vieillard2019momentum} incorporated the momentum idea to value iteration by viewing the greedy policy as an analog of gradient ascent.
However, theoretical justification of these momentum-based Q-learning are very limited. Only \cite{azar2011speedy} provided a finite-sample analysis in the {\em tabular} case under particularly chosen learning rate, whereas \cite{devraj2019nesa} provided only the asymptotic property without provable finite-sample convergence. To the best of our knowledge, the {\em finite-sample} convergence rate has not been established for momentum-based Q-learning algorithms with function approximation yet. The focus of the study here is to address the above important question.

\subsection{Main Contributions}

This paper investigates a general momentum-based Q-learning scheme (referred to as MomentumQ hereafter), which involves both NAG-type and HB-type of history information for accelerating Q-learning. The main contribution of this paper is three-fold.




First, we establish the finite-sample convergence rate for MomentumQ with linear function approximation, and we show that this algorithm provably accelerates vanilla Q-learning. To the best of our knowledge, this is the first finite-sample convergence guarantee for momentum-based Q-learning with linear function approximation. 

Second, the only existing finite-sample baseline bound for momentum-based Q-learning is given by SpeedyQ~\citep{azar2011speedy} for the tabular case. Hence, to be able to compare with such a baseline, we also provide a finite-sample analysis of MomentumQ in the tabular case. We show that it achieves a better (but order-wisely the same) convergence rate than SpeedyQ. Technically, due to the additional momentum terms in MomemtumQ, its analysis is more challenging than SpeedyQ and requires substantial new technical developments. 



Finally, our numerical results show that the proposed MomentumQ outperforms the vanilla Q-learning as well as the other existing momentum-based Q-learning algorithms for both tabular and function approximation cases.

\subsection{Related work}\label{subsec:relatedWork}
We review the most relevant studies on Q-learning here with a focus on the theoretical convergence analysis.

\textbf{Q-learning with function approximation:} When the state-action space is considerably large or even continuous, it is practical to properly discretize the space \citep{shah2018q}, or parameterize the Q-function with a certain function class. 
For function approximation with neural networks, \cite{yang2019theoretical} provided statistical results for a Deep-Q-Network (DQN)-type algorithm. \cite{lee2020periodicQ} further analyzed a similar variant with periodic target function update and established an improved sample complexity bound. 
For linear MDP, \cite{melo2007q,yang2019sample} proposed provably sample-efficient Q-learning algorithms with linear function approximation. For more general MDPs with linear function approximation of the Q-function, finite-sample convergence analysis was established in~\cite{zou2019finite,Chen2019finiteQ} under Markovian sampling, in \cite{du2019provably} on exploration samples and in \cite{ijcai2020-422} by incorporating Adam-type updates. Recently, \cite{cai2019neural,xu2019deepQ} established the convergence rate of Q-learning with neural network approximation in the overparameterized regime under i.i.d.\ and non-i.i.d\ sampling, respectively. 


\textbf{Tabular Q-learning:}
Q-learning was first proposed in \cite{watkins1992q} under finite state-action space. 
Regarding the theoretical studies, research of tabular Q-learning has focused on the asymptotic convergence which was usually studied via its connection to the corresponding stochastic approximation algorithm (see, for example,~\cite{tsitsiklis1994asynchronous,jaakkola1994convergence,borkar2000ode, melo2001convergence}). More recently, \cite{Lee2019Switch} provided asymptotic results for asynchronous Q-learning by formulating it as a switching affine system. Another research line has focused on the finite-sample (i.e., non-asymptotic) analysis. Finite-sample performance for Q-learning was first established in \cite{szepesvari1998asymptotic}. Considering both synchronous and asynchronous Q-learning,  \cite{even2003learning} investigated the convergence rates under different choices of the learning rates. Sharper bounds on the finite-sample convergence rate have been established in more recent work \citep{wainwright2019stochastic,qu2020finite,li2020sample}.

\textbf{Momentum-based Q-learning:} For tabular Q-learning, several studies incorporated the momentum idea in conventional optimization to accelerate the convergence. \cite{azar2011speedy} proposed the SpeedyQ algorithm and characterized the finite-sample performance. 
\cite{devraj2019nesa} extended HB with a matrix learning rate on the momentum, which is similar to a special formulation of NAG. The asymptotic performance was analyzed under simplified assumptions. 
~\cite{vieillard2019momentum} proposed a momentum-based value iteration and generalized the scheme to DQN. While some theoretical properties of the algorithms were explored in the tabular case, the convergence of the algorithm was not established. Among these studies, only \cite{azar2011speedy} characterized the finite-sample rate for SpeedyQ in the tabular case, and such finite-sample analysis for momentum-based Q-learning algorithms has not been provided for the function approximation case, which is the focus of this paper.

\textbf{Other variants of Q-learning:} Other than the above momentum-based Q-learning algorithms, which mainly exploit the acceleration ideas in conventional optimization, Q-learning also inspires a number of other variants, including residual Q-learning~\citep{baird1995residual}, phased Q-learning~\citep{kearns1999finite}, Zap Q-learning~\citep{devraj2017zap}, and periodic Q-learning~\citep{lee2020periodicQ}, to name a few. These algorithms are proposed to speed up convergence rates or improve the performance by mitigating various issues in the implementation of Q-learning. In this paper, we mainly focus on the momentum-based Q-learning algorithm motivated by the optimization idea.





\subsection{Organization}

The rest of the paper is organized as follows. Section~\ref{sec: tabularQ} reviews the background of Q-learning. In section~\ref{sec:momentumQ}, we propose the MomentumQ algorithm, described in both tabular case and under linear function approximation.
Section~\ref{sec:ctnQ} establishes the finite-sample convergence guarantee for the proposed MomentumQ under linear function approximation, followed by the finite-sample analysis for the tabular case. 
Section~\ref{sec:exp} numerically evaluates the proposed algorithm and compares it with several other algorithms via experiments of a series of FrozenLake grid world games.

\section{Preliminaries}\label{sec: tabularQ}

In this section, we provide the background of the Markov decision process, followed by the preliminaries of tabular Q-learning and then Q-learning with linear function approximation.

\subsection{Markov Decision Process}\label{subsec:MDP}
We consider the standard reinforcement learning setting, where a learning agent interacts with a (possibly stochastic) environment modeled as a discrete-time discounted Markov decision process (MDP). Such an MDP is characterized by a quintuple $(\mathcal{X},\mathcal{U},P,R, \gamma)$, where $\mathcal{X}$ is the state space, $\mathcal{U}$ is the action space, $P:\mathcal{X}\times \mathcal{U} \times \mathcal{X}\mapsto [0,1]$ is the probability transition kernel, namely, $P(\cdot|x, u)$ denotes the probability that the system takes the next state given the current state $x$ and action $u$. In addition, $R: \mathcal{X}\times \mathcal{U}\mapsto[0,R_{\max}]$ denotes the reward function (or negative of the cost function) mapping the state-action pairs to a bounded subset of $\mathbb{R}$, and $\gamma\in (0,1)$ is the discount factor. A policy $\pi:\mathcal{X}\mapsto\mathcal{U}$ represents a strategy to take actions, i.e., it captures the probability of taking each action at any given state. By following a policy $\pi$, we perform an action $u_k$ with probability $\pi(u_k|x_k)$ at time $k$, observe a reward $r_k=R(x_k,u_k)$, and evolve to the next state $x_{k+1}$ with the probability $P(x_{k+1}|x_k,u_k)$. Under the policy $\pi$, the return is the sum of the observed rewards over the entire time horizon. 
We define the value function as the expected return of following policy $\pi$ and starting from state $x$, given by $V^\pi(x)={\mathbb{E}_P\sum_{k=0}^\infty\gamma^k{r_k}}$, where $\mathbb{E}_P$ denotes the expectation with respect to the transition probability $P$. The $Q$-function is defined as the state-action value function  $Q^{\pi}(x,u)=R(x,u)+\gamma \sum_{y\in\mathcal{X}}P(y|x, u)V^{\pi}(y)$, which is the return of performing action $u$ at state $s$ at the first step and following policy $\pi$ thereafter. 

\subsection{Tabular Q-learning}
Q-learning seeks to maximize the expected discounted return over policy $\pi$ as formulated below.
\begin{align}
    & \underset{\pi}{\text{maximize}}
    & & V^{\pi}(x_0) = {\mathbb{E}_P}\left[\sum_{k=0}^{\infty} \gamma^k R(x_k, \pi(x_k))\right], \nonumber\\
    & \text{subject to}
    & & x_{k+1} \sim P(\cdot|x_k, \pi(x_k)),\label{eq:systemEquation}
\end{align}
 We let $\pi^{\star}$ denote the optimal stationary policy $\pi^{\star}:\mathcal{X}\mapsto \mathcal{U}$ of MDP which is the solution of the above optimization problem. 
 
 
Define the Bellman operator $\mathcal{T}$ pointwisely as
\begin{equation}
    \label{eq:BellmanOperator}
    \mathcal{T}Q(x,u) = R(x,u)+\gamma\mathbb{E}_P \underset{u^\prime \in U(x^\prime)}{\max}Q(x^\prime,u^\prime),
\end{equation}
where $x^\prime \sim P(\cdot | x,u)$ and $U(x)$ denotes the admissible set of actions at state $x$. It can be shown that the Bellman operator $\mathcal{T}$ is $\gamma$-contractive in the supremum norm $\norm{Q}:=\sup_{x,u}|Q(x,u)|$, i.e., it satisfies
\begin{equation}
    \label{eq:Contraction}
    \norm{\mathcal{T}Q(x,u) - \mathcal{T}Q'(x,u)} \leq \gamma\norm{Q(x,u) - Q'(x,u)}.
\end{equation}
Thus, $\mathcal{T}$ has a unique fixed point $Q^{\star}$, which satisfies 
the \textit{optimal Bellman equation}~\citep{bertsekas1996neuro} given by 
\begin{equation}
    \label{eq:OptimalBellmanEq}
    Q^{\star}(x,u) = R(x,u)+\gamma\mathbb{E}_P \underset{u^\prime \in U(x^\prime)}{\max}Q^{\star}(x^\prime,u^\prime),
\end{equation}
and the associated policy $\pi^{\star}$ is the optimal solution of \eqref{eq:systemEquation}.
The above property suggests that starting with an arbitrary initial Q-function, we can apply the Bellman operator $\mathcal{T}$ iteratively to learn $Q^{\star}$. 

Let $V^{\star}(x):=V^{\pi^{\star}}(x)$ be the optimal value function corresponding to the optimal policy $\pi^{\star}$. It relates to $Q^{\star}$ as follows
\begin{equation}
    \label{eq:bellQ}
    V^{\star}(x)=\underset{u\in U(x)}{\max}Q^{\star}(x,u), \forall x\in\mathcal{X}.
\end{equation}
Hence, the optimal policy can be obtained from the optimal Q-function as:
\begin{equation}
    \label{eq:optPol}
    \pi^{\star}(x) = \underset{u \in U(x)}{\text{argmax}}\ Q^{\star}(x,u),\forall x\in\mathcal{X}.
\end{equation}
Note that the knowledge of the transition probability $P$ is not needed in~\eqref{eq:optPol}, which is one advantage of Q-learning.

In practice, exact evaluation of the Bellman operator~\eqref{eq:BellmanOperator} is usually not feasible due to the lack of the knowledge of the system dynamics (i.e. the transition probability kernel). Instead, the \textit{empirical Bellman operator} is used as an estimator based on samples~\citep{jaakkola1994convergence}. 
Specifically, for the $k$th round of iteration at the state-action pair $(x,u)$, we sample the next state $y_k\sim P(\cdot|x,u)$, and then evaluate the empirical Bellman operator $\hat{\mathcal{T}}_{k}$ as
\begin{equation}\label{eq:empTQ}
    \hat{\mathcal{T}}_{k}Q_k(x,u) = R(x,u)+\gamma \underset{u'\in U(y_k)}{\max}Q_k(y_k,u'),
\end{equation}
where the subscript $k$ in $\hat{\mathcal{T}}_k$ is to track the time index of samples $y_k$ that are used. Then the iteration of tabular Q-learning is implemented as
\begin{equation}\label{eq:tabular Q-learning}
    Q_{k+1} =  Q_k  - \alpha_k(Q_k - \hat{\mathcal{T}}_{k}Q_k),
\end{equation}
where $\alpha_k$ is the stepsize and we omit the dependence on $(x,u)$ for simplicity when there is no confusion.

\subsection{Q-learning with Linear Function Approximation}

For relatively large or even infinite state-action space $\mathcal{X}\times\mathcal{U}$, it is impractical to express the Q-function in an explicit tabular form with respect to each state-action pair. In such a case, the update rule of \eqref{eq:tabular Q-learning} is no longer directly applicable.

To handle such cases, a parametric function $\hat{Q}(x,u ; \theta)$ is adopted as an approximation of the Q-function, where the parameter vector $\theta$ is of small dimension. 
Our focus here is the linear function class, which is often considered in the literature for establishing the finite-sample analysis~\citep{zou2019finite,Chen2019finiteQ,du2019provably}. Then the Q-function $\hat{Q}(x, u;\theta)$ can be written as
\begin{equation}
    \label{eq:linearApprox}
    \hat{Q}(x, u;\theta) = \Phi(x, u)^T\theta, 
\end{equation}
where $\theta\in\mathbb{R}^d$, and $\Phi:\mathcal{S}\times \mathcal{A}\rightarrow \mathbb{R}^d$ is a vector function of size $d$, and the elements of $\Phi$ represent the nonlinear kernel (feature) functions. Correspondingly, the updating rule of Q-learning with linear function approximation is given by
\begin{equation}\label{eq:linearPQL}
    \theta_{k+1}\! =\! \theta_k - \alpha_k \Phi(x_k,u_k)\left[ \Phi(x_k,u_k)^T\theta_k - R(x_k,u_k) - \gamma \underset{u'\in U(x_{k+1})}{\max} \Phi(x_{k+1},u')^T\theta_k  \right],
\end{equation}
where $\alpha_k$ is the stepsize.

\section{MomentumQ Algorithm}\label{sec:momentumQ}

In this section, we introduce the MomentumQ algorithm that we study. 

\subsection{Tabular MomentumQ}

Overall, MomentumQ integrates the Nesterov's momentum~\citep{nesterov2013introductory} and Polyak's Momentum ~\citep{polyak1964some} together, with the learning rates flexibly interpolating between the two to optimize the momentum performance. Specifically, MomentumQ takes the form given by
\begin{equation}
    \begin{aligned}
    \label{eq: acc tabular Q-learning}
    & S_k = (1-a_k)Q_{k-1} + a_k\hat{\mathcal{T}}_{k}Q_{k-1},\\
    & P_{k} = (1-a_k)Q_k + a_k\hat{\mathcal{T}}_{k}Q_k, \\
    & Q_{k+1} = P_{k} \ \ + \underbrace{b_k(P_{k} - S_{k})}_\text{Nesterov's momentum} +\ \ \underbrace{c_k(Q_{k}-Q_{k-1})}_\text{Polyak's momentum}.
    \end{aligned}
\end{equation}
where $a_k,b_k,c_k$ determine the learning rates. 
Algorithm~\ref{alg:AQL} implements MomentumQ with a particular family of learning rates under synchronous sampling~\citep{even2003learning}. One special feature of the algorithm is the additional freedom introduced by the hyperparameter $m$. We will see later in the simulation that the proposed algorithm accelerates the convergence for arbitrarily chosen $m$ that satisfies $m\geq 1/\gamma$.
\begin{algorithm}
\caption{\label{alg:AQL}Synchronous Tabular MomentumQ}
\begin{tabbing}
\tabfill{\noindent\textbf{Input:} Initial action-value function $Q_0$ and $Q_{-1}=Q_0$, discount factor $\gamma$, hyperparameter $m\geq \frac{1}{\gamma}$, and maximum 
iteration number $T$} \\
\textbf{for} \= $k=0,1,2,\cdots, T-1$ \textbf{do} \\
\> $a_k = \frac{1}{k+1},\ b_{k}=k-m-1,\ c_{k}=\frac{-k^2+(m+1)k+1}{k+1}$; \\
\>\textbf{for} \= each $(x,u)\in \mathcal{X}\times U(x)$ \textbf{do}\\
\> \> Generate the next state sample $y_k\sim P(\cdot|x,u);$\\
\> \> $\hat{\mathcal{T}}_{k}Q_{k-1}(x,u)= R(x,u)+\gamma\max_{u\in U(y_k)}Q_{k-1}(y_k,u);$ \\
\> \> $\hat{\mathcal{T}}_{k}Q_{k}(x,u)= R(x,u)+\gamma\max_{u\in U(y_k)}Q_k(y_k,u);$ \\
\> \> $S_{k}(x,u)= (1-a_{k})Q_{k-1}(x,u)+a_{k}\hat{\mathcal{T}}_{k}Q_{k-1}(x,u);$ \\
\> \> $P_{k}(x,u)= (1-a_{k})Q_{k}(x,u)+a_{k}\hat{\mathcal{T}}_{k}Q_{k}(x,u);$ \\
\> \> $Q_{k+1}(x,u)= $ \= $P_{k}(x,u)+b_k\left(P_{k}(x,u)-S_{k}(x,u) \right)$ $+c_k(Q_k(x,u)-Q_{k-1}(x,u));$ \\
\> \textbf{end for} \\
\textbf{end for}\\
\textbf{Output:} $Q_T$
\end{tabbing}
\end{algorithm}

Note that the proposed MomentumQ algorithm in \eqref{eq: acc tabular Q-learning} contains not only the momentum term $\hat{\mathcal{T}}_{k}Q_{k-1}$ in the update, but also the historical information $Q_{k-1}$ explicitly. This additional historical information can smooth out large overshoots during the iteration and subsequently accelerate the convergence. This can be observed clearly in the experiment when compared to SpeedyQ, which is given by 
\begin{equation}\label{eq:sql}
    Q_{k+1} = Q_k + a_k(\hat{\mathcal{T}}_{k}Q_k - Q_k) + (1-a_k)(\hat{\mathcal{T}}_{k}Q_k - \hat{\mathcal{T}}_{k}Q_{k-1}).
\end{equation}
We see from~\eqref{eq:sql} that SpeedyQ contains only the momentum term $\hat{\mathcal{T}}_{k}Q_{k-1}$ in the update. In contrast, MomentumQ additionally incorporates the historical information $Q_{k-1}$ explicitly. Indeed, the simulation in \Cref{sec:exp} shows that MomentumQ effectively smoothes out the large overshoots that are present in SpeedyQ and converges faster. 
The finite-sample anlysis of MomentumQ is more challenging than SpeedyQ due to this difference, since the additional $Q_{k-1}$ term increases the order of the recursion. We will discuss in more details later.
Furthermore,~\eqref{eq:sql} simply involves $\hat{\mathcal{T}}_{k}Q_k - \hat{\mathcal{T}}_{k}Q_{k-1}$ as the only momentum term, while our algorithm designs this part more systematically. We directly use two consecutive outputs of the empirical Bellman operators to update the Q-function and obtain $S_k$ and $P_k$. Intuitively, since $S_k$ and $P_k$ are derived by the update of the vanilla Q-learning, selecting $S_k-P_k$ as the additional momentum term may contribute to a better estimation of the optimal Q-function while preserving the acceleration. This intuition is also verified in our numerical results, which will be shown in Section \ref{sec:exp}. 

\subsection{MomentumQ with Linear Function Approximation}\label{subsec:PAQL}

For the case where the state-action space is considerably large, suppose the linear function approximation is used for estimating the Q-function to overcome the curse of dimensionality. 

\begin{algorithm}
\caption{\label{alg:linearPAQL}MomentumQ with linear function approximation}
\begin{tabbing}
\noindent\textbf{Input:} Initial parameters $\theta_0$ and $\theta_{-1}=\theta_0$; discount factor $\gamma$; iteration number $T$.\\
\textbf{for} \= $k=0,1,2,\cdots, T-1$ \textbf{do} \\
\> Assign $a_k, b_{k}, c_k$; \\
\> Sample $u_k\sim\pi, x_{k+1}\sim P(\cdot|x_k,u_k)$;\\
\> Compute $g_k=\left(\Phi(x_k,u_k)^T\theta_k - R(x_k,u_k) - \gamma \underset{u'\in U(x_{k+1})}{\max} \Phi(x_{k+1},u')^T\theta_k\right)\Phi(x_k,u_k)$;\\
\> Update $\theta_{k+1}=\theta_k + (b_k + c_k)(\theta_k - \theta_{k-1}) -a_k(1+b_k)g_k + a_k b_k g_{k-1}$;\\
\textbf{end for}\\
\textbf{Output:} $\theta_{\text{out}}$.
\end{tabbing}
\end{algorithm}

Consider the case where the Q-function is approximated by a linear parameterized function. We propose MomentumQ for this case as 
\begin{equation}\label{eq:accLinear}
    \begin{aligned}
        \theta_{k+1} = \theta_k + (b_k + c_k)(\theta_k - \theta_{k-1}) -a_k(1+b_k)g_k + a_k b_k g_{k-1},
    \end{aligned}
\end{equation}
where 
\begin{align}
g_k :=& g(\theta_k;x_k,u_k,x_{k+1})\nonumber\\
=& \left(\Phi(x_k,u_k)^T\theta_k - R(x_k,u_k) - \gamma \underset{u'\in U(x_{k+1})}{\max} \Phi(x_{k+1},u')^T\theta_k\right)\Phi(x_k,u_k).\label{eq:gkQ}
\end{align}


We focus on the more practical Markovian sampling model, in which the data tuples are sequentially drawn from a single trajectory under an unknown stationary distribution. More implementation details can be referred to Algorithm \ref{alg:linearPAQL}.

\section{Finite-sample Analysis under Markovian Sampling}\label{sec:ctnQ}


In this section, we present our main results on the finite-sample convergence rate guarantee for MomentumQ. We focus on linear function class and provide the first finite-sample analysis for momentum-based Q-learning with function approximation.
We also present our study of tabular MomentumQ in order to make a comparison with the only existing theory baseline for momentum-based Q-learning, which was established for tabular SpeedyQ. 

\subsection{MomentumQ with Linear Function Approximation}


In this section, we characterize the finite-sample convergence guarantee for the proposed MomentumQ algorithm with linear function approximation under Markovian sampling. To proceed the convergence analysis, we first define 
\begin{align}
\bar g(\theta) :=& \underset{\mu}{\mathbb{E}}[g(\theta;x,u,x')]\nonumber\\
=& \underset{\mu}{\mathbb{E}}\left[ (\Phi(x,u)^T\theta - R(x,u) - \gamma \underset{u'\in U(x')}{\max} \Phi(x',u')^T\theta)\right]\Phi(x,u),\label{eq:gbarQ}
\end{align}
where the expectation is taken over the stationary distribution of the sampling tuple $(x,u,x')$. 

We take the following standard assumptions in our analysis.
\begin{assumption}
    \label{asp:boundedPhi}
    The columns of $\Phi$ are linearly independent and $\|\Phi\|_2 \leq 1$.
\end{assumption}
\begin{assumption}
    \label{asp:qlearning}
    The term $\bar g(\cdot)$ has a unique root denoted as $\theta^\star$, i.e., $\bar g(\theta^\star)=0$. There exists a constant $\delta>0$, such that for any $\theta\in\mathbb{R}^d$ we have
    \begin{equation}\label{eq:qasp}
        (\theta - \theta^\star)^T\bar g(\theta)\geq \delta\norm{\theta - \theta^\star}_2^2.
    \end{equation}
\end{assumption}
\begin{assumption}
    \label{asp:boundedDomain}
   The domain of the approximation parameters $\theta$ is contained in a ball $\mathcal{B}$ that includes  $\theta^\star$ and is centered around $\theta = 0$ with a bounded diameter. That is, there exists $D_{\max}$, such that $\norm{\theta - \theta'}_2 \leq D_{\max}, \forall \theta, \theta'\in \mathcal{B}$, and $\theta^\star\in\mathcal{B}$.
\end{assumption}
\begin{assumption}\label{asp:markov}
    There exist constants $\sigma > 0$ and $\rho \in (0,1)$ such that
    $$ \underset{x\in\mathcal{X}}{\sup}\ d_{TV}(\mathbb{P}(x_k\in \cdot|x_0 = x) ,\mu)\leq \sigma\rho^k \quad \forall k,
    $$
    where $d_{TV}(\mu,\nu)$ denotes the total-variation distance between the probability measures $\mu$ and $\nu$.
\end{assumption}
Assumptions \ref{asp:boundedPhi} and \ref{asp:qlearning} are standard in the literature on theoretical analysis of Q-learning algorithms with linear function approximation \citep{bhandari2018finite,Chen2019finiteQ,zou2019finite}. The boundedness condition in Assumption \ref{asp:boundedPhi} can be justified by normalization and hence does not lose generalization. Assumption \ref{asp:markov} can easily hold for irreducible and aperiodic Markov chains, and is widely adopted in the literature on theoretical analysis of RL algorithms under Markovian sampling \citep{bhandari2018finite,Chen2019finiteQ,zou2019finite,xu2019deepQ,xiong2020amsgradRL}. For Assumption \ref{asp:markov}, we further define the quantity of the mixing time $\tau^{mix}(\cdot)$ as follows, which denotes the duration of the time for the Markov chain to approach sufficiently close to its steady-state 
\begin{equation}\label{eq:mixtime}
    \tau^*:=\tau^{mix}(\kappa) := \min\left\{ k=1,2,\dots |\sigma\rho^{k}\leq\kappa \right\}.
\end{equation}

To understand the challenges of analyzing Markovian sampling in MomemtumQ, we first illustrate how a non-zero bias is introduced if the Markovian sampling is considered. For simplicity, we denote $O_k:=(x_k,u_k,x_{k+1})$ as the data at time step $k$ sampled from a Markov chain. Recall $g_k(\theta;O_k)$ in \eqref{eq:gkQ}, and $\bar g(\theta) = \mathbb{E}[g(\theta;O_k)]$ in \eqref{eq:gbarQ} where the expectation is taken over the marginal distribution of $O_k$ since $\theta$ is fixed. However, if $\theta$ is random and dependent on $O_k$, the equality no longer holds. In particular, since $\theta_k$ is dependent on the historical tuples $\{O_1,O_2,\dots,O_k\}$, we have 
$$ \bar g(\theta_k)\neq \mathbb{E}[g(\theta_k; O_k)|\theta_k].
$$
Thus, we have a non-zero bias due to Markovian sampling to approximate the expectation of $g_k^T(\theta_k-\theta^\star)$. Namely,
$$ \mathbb{E}[g_k^T(\theta_k-\theta^\star)] = \mathbb{E}[\bar g(\theta_k)^T(\theta_k-\theta^\star)] + \mathbb{E}[(g_k - \bar g(\theta_k))^T(\theta_k-\theta^\star)],
$$
where the second term on the right hand side captures the bias, which is the key challenge of the analysis under this setting. The following lemma develops an important upper bound on the bias term, which is a key step in the convergence analysis.
\begin{lemma}\label{lem:bias}
Suppose that Assumptions \ref{asp:boundedPhi}-\ref{asp:markov} hold and fix $\kappa>0$ in \eqref{eq:mixtime}. Let MomentumQ update as \eqref{eq:accLinear} by choosing non-increasing $a_k,b_k,c_k$ and denote $\beta_k = b_k+c_k$ with $\beta_k\in(0,1)$. Then we have
\begin{equation*}
    \mathbb{E}[(g_k - \bar g(\theta_k))^T(\theta_k-\theta^\star)]\leq
    \begin{cases}
    \eta_1\sum_{i=1}^{k-1}\beta_i + \eta_2\sum_{i=1}^{k-1}a_i, \quad & k\leq \tau^*;\\
    4D_{\max}G_{\max}\kappa + \eta_1\tau^*\beta_{k-\tau^*} + \eta_2\tau^*a_{k-\tau^*}, \quad & k>\tau^*,
    \end{cases}
\end{equation*}
where $\eta_1 = 2D_{\max}((1+\gamma)D_{\max}+G_{\max}),\eta_2 = 6G_{\max}((1+\gamma)D_{\max}+G_{\max})$ with $G_{\max} = 2D_{\max} + R_{\max}$.
\end{lemma}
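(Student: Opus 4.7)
The plan is to adapt the standard auxiliary-Markov-chain decomposition used in finite-sample analyses of TD/Q-learning under Markovian sampling to accommodate the momentum update \eqref{eq:accLinear}. The novelty is that the extra $(\theta_k-\theta_{k-1})$ feedback and the lagged pseudogradient $g_{k-1}$ raise the order of the recursion and couple successive step sizes; the new $\eta_1\tau^*\beta_{k-\tau^*}$ contribution in the bound is precisely what is needed to absorb this coupling.

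First, I would collect two preparatory facts. Using $\|\Phi\|_2\leq 1$, the diameter condition of Assumption~\ref{asp:boundedDomain} (so $\|\theta\|_2\leq D_{\max}$), and $|R|\leq R_{\max}$, one gets $\|g(\theta;O)\|_2\leq G_{\max}=2D_{\max}+R_{\max}$ and the same bound for $\bar g$. A short calculation using $\gamma$-contractivity of the max operator gives the Lipschitz estimates $\|g(\theta;O)-g(\theta';O)\|_2\leq(1+\gamma)\|\theta-\theta'\|_2$ and likewise for $\bar g$. Combining these with the update rule \eqref{eq:accLinear} yields a one-step bound $\|\theta_{i+1}-\theta_i\|_2\leq\beta_i\|\theta_i-\theta_{i-1}\|_2+Ca_iG_{\max}$. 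Because of the momentum coupling this recursion cannot be unrolled cleanly by a Gr\"onwall-type step, so I would plug in $\|\theta_i-\theta_{i-1}\|_2\leq D_{\max}$ to decouple the cycle and obtain $\|\theta_{i+1}-\theta_i\|_2\leq\beta_iD_{\max}+Ca_iG_{\max}$. I expect this to be the main source of technical subtlety, since any looseness here propagates directly into both $\eta_1$ and $\eta_2$.

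Writing $h(\theta,O):=\langle g(\theta;O)-\bar g(\theta),\,\theta-\theta^\star\rangle$, I would introduce an auxiliary iterate $\theta':=\theta_{k-\tau^*}$ when $k>\tau^*$ (and $\theta':=\theta_0$ when $k\leq\tau^*$), which is measurable with respect to the earlier $\sigma$-algebra $\mathcal{F}_{k-\tau^*}$, and split
\begin{equation*}
\mathbb{E}[h(\theta_k,O_k)]=\underbrace{\mathbb{E}[h(\theta_k,O_k)-h(\theta',O_k)]}_{\text{(I)\ drift}}+\underbrace{\mathbb{E}[h(\theta',O_k)]}_{\text{(II)\ mixing}}.
\end{equation*}
Expanding the inner products in (I) and using the Lipschitz bounds together with $\|\theta_k-\theta^\star\|_2\leq D_{\max}$, term (I) is controlled by $2((1+\gamma)D_{\max}+G_{\max})\,\mathbb{E}\|\theta_k-\theta'\|_2$. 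Telescoping $\mathbb{E}\|\theta_k-\theta'\|_2\leq\sum_{i=k-\tau^*}^{k-1}\mathbb{E}\|\theta_{i+1}-\theta_i\|_2$, substituting the per-step bound from the preceding paragraph, and invoking the non-increasing property of $\beta_i,a_i$ produces exactly the $\eta_1\tau^*\beta_{k-\tau^*}+\eta_2\tau^*a_{k-\tau^*}$ terms with the constants $\eta_1=2D_{\max}((1+\gamma)D_{\max}+G_{\max})$ and $\eta_2=6G_{\max}((1+\gamma)D_{\max}+G_{\max})$.

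For term (II), I would condition on $\mathcal{F}_{k-\tau^*}$. Since $\theta'$ is measurable there, the conditional expectation reduces to $\langle\mathbb{E}[g(\theta';O_k)\mid\mathcal{F}_{k-\tau^*}]-\bar g(\theta'),\,\theta'-\theta^\star\rangle$, i.e.\ the integral of a vector of norm at most $G_{\max}$ against the signed measure equal to the conditional law of $O_k$ minus the stationary law $\mu$. By Assumption~\ref{asp:markov} and the definition of $\tau^*$ in \eqref{eq:mixtime}, the total variation of that signed measure is at most $\sigma\rho^{\tau^*}\leq\kappa$, so term (II) is bounded by $2\cdot 2G_{\max}D_{\max}\kappa=4D_{\max}G_{\max}\kappa$. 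For the case $k\leq\tau^*$, where full mixing has not yet occurred, I would replace $\theta_{k-\tau^*}$ by the deterministic $\theta_0$; term (I) then telescopes from the initialization and gives $\eta_1\sum_{i=1}^{k-1}\beta_i+\eta_2\sum_{i=1}^{k-1}a_i$, and the residual mixing contribution is subsumed into this (finite burn-in) sum, yielding the first branch of the bound and completing the proof.
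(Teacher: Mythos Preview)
Your proposal is correct and follows essentially the same route as the paper: define $\xi(\theta,O)=h(\theta,O)$, establish its boundedness and $2((1+\gamma)D_{\max}+G_{\max})$-Lipschitz property, control the one-step parameter increment by $\beta_iD_{\max}+3a_iG_{\max}$ (this is your $C=3$, from $|1+b_i|+|b_i|\leq 3$), then split the bias into a drift term handled by Lipschitz continuity plus telescoping and a mixing term handled by the TV-distance bound with the auxiliary point $\theta_{k-\tau^*}$ (or $\theta_0$). The only cosmetic difference is that the paper phrases the mixing step via an ``independent copies'' lemma (Bhandari et al.), whereas you condition directly on $\mathcal{F}_{k-\tau^*}$; the two arguments are equivalent, and in the $k\leq\tau^*$ branch the paper in fact takes $\mathbb{E}[h(\theta_0,O_k)]=0$ rather than merely ``subsumed''.
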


With the bias term bounded, we are ready to provide the convergence result for MomentumQ with linear function approximation under Markovian sampling.
\begin{theorem}\label{thm:acclinearNoniid}
    (MomentumQ with constant learning rate) Suppose that Assumptions \ref{asp:boundedPhi}-\ref{asp:markov} hold and fix $\kappa>0$ in \eqref{eq:mixtime}. Let $a_k=\alpha, b_k+c_k=\beta\lambda^k$ where $\beta,\lambda\in (0,1)$ and $\alpha\in(0,\frac{1-\lambda}{2\delta})$. After running $T$ steps of Algorithm \ref{alg:linearPAQL}  under Markovian sampling, we take the output $\theta_{\text{out}} = \theta_T$ and have
    \begin{align}
    \mathbb{E}\norm{\theta_{\text{out}}-\theta^\star}_2^2
    &\leq \prod_{i=0}^{T-1} (1 - 2\alpha\delta(1+b_i))\norm{\theta_{0}-\theta^\star}_2^2 \nonumber\\
    &\quad + \beta\left( \frac{2\eta_1\tau^*}{\delta} + \frac{5 D_{\max}^2 + 2\alpha D_{\max}G_{\max} + 4\alpha \eta_1\tau^*\lambda }{1-2 \alpha\delta - \lambda} \right)(1-2 \alpha\delta)^{T-1-\tau^*} \nonumber \\
    &\quad + \frac{15 G_{\max}^2\alpha}{2\delta} + \frac{2\eta_2\tau^*\alpha}{\delta} +   \frac{8D_{\max}G_{\max}\kappa}{\delta}, \label{eq:thmConstantPAQLNoniid}
    \end{align}
    where $\eta_1,\eta_2$ are defined in Lemma \ref{lem:bias}.
\end{theorem}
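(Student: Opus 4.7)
My plan is to derive a one-step error recursion, control its Markovian bias via Lemma~\ref{lem:bias}, and then unroll. Let $e_k := \theta_k - \theta^\star$ and $\beta_k := b_k+c_k = \beta\lambda^k$. Rewriting the linear update~\eqref{eq:accLinear} in these variables gives
\begin{equation*}
e_{k+1} = e_k + \beta_k(e_k - e_{k-1}) - a_k(1+b_k)g_k + a_k b_k g_{k-1}.
\end{equation*}
Squaring and taking expectation produces the principal descent $-2a_k(1+b_k)\mathbb{E}\langle e_k, g_k\rangle$, together with momentum cross terms $2\beta_k\mathbb{E}\langle e_k, e_k - e_{k-1}\rangle + 2a_kb_k\mathbb{E}\langle e_k, g_{k-1}\rangle$ and a quadratic remainder involving $\beta_k^2$, $a_k^2$, and their cross products.

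For the descent, I would split $g_k = \bar g(\theta_k) + (g_k - \bar g(\theta_k))$ so that Assumption~\ref{asp:qlearning} yields $\mathbb{E}\langle e_k, \bar g(\theta_k)\rangle \ge \delta\,\mathbb{E}\norm{e_k}_2^2$, while Lemma~\ref{lem:bias} controls the Markovian bias by $O(\tau^*\beta_{k-\tau^*} + \tau^* a_{k-\tau^*} + \kappa)$ for $k>\tau^*$. The momentum cross and quadratic remainders are handled by Cauchy--Schwarz with the uniform bounds $\norm{e_k}_2 \le D_{\max}$ (Assumption~\ref{asp:boundedDomain}) and $\norm{g_k}_2 \le G_{\max}$ (inferred from Assumption~\ref{asp:boundedPhi} together with the boundedness of $R$ and $\theta$); since $b_k\le\beta_k$ and $a_k=\alpha$, these contributions have magnitude $O(\beta_k D_{\max}^2) + O(\alpha\beta_k D_{\max}G_{\max}) + O(\alpha^2 G_{\max}^2)$. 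Combining everything yields a one-step inequality of the form
\begin{equation*}
\mathbb{E}\norm{e_{k+1}}_2^2 \le \bigl(1 - 2\alpha\delta(1+b_k)\bigr)\mathbb{E}\norm{e_k}_2^2 + C_1\beta_k + C_2\alpha^2 + C_3\alpha\kappa,
\end{equation*}
where $C_1,C_2,C_3$ scale linearly with $D_{\max}^2, G_{\max}^2, \eta_1\tau^*, \eta_2\tau^*$ exactly as in the constants appearing in the stated bound.

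Unrolling this recursion from $k=0$ to $T-1$ produces the three lines of~\eqref{eq:thmConstantPAQLNoniid}. The initial error is multiplied by $\prod_{i=0}^{T-1}(1-2\alpha\delta(1+b_i))$, matching the first line. The momentum residuals form a weighted sum $\beta\sum_k \lambda^k(1-2\alpha\delta)^{T-1-k}$; the hypothesis $\alpha<(1-\lambda)/(2\delta)$ is equivalent to $1-2\alpha\delta>\lambda$, which is precisely the condition making this geometric series converge to the closed form $\beta(1-2\alpha\delta)^{T-1-\tau^*}/(1-2\alpha\delta-\lambda)$ on the second line. The $O(\alpha^2)$ and $O(\alpha\kappa)$ remainders, summed geometrically with ratio $(1-2\alpha\delta)$, collapse into steady-state residuals of order $\alpha/\delta$ and $\kappa/\delta$, producing the third line.

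The main obstacle will be handling the coupling introduced by the momentum difference $e_k - e_{k-1}$: because the iteration is effectively second-order, a naive squared-norm argument pairs $\mathbb{E}\norm{e_k}_2^2$ with $\mathbb{E}\norm{e_{k-1}}_2^2$ in a way that does not decouple. I would resolve this either by invoking the identity $2\langle e_k, e_k-e_{k-1}\rangle = \norm{e_k}_2^2 - \norm{e_{k-1}}_2^2 + \norm{e_k - e_{k-1}}_2^2$ and telescoping, or by using the crude uniform bound $\norm{e_k - e_{k-1}}_2 \le 2D_{\max}$; the latter suffices here because $\beta_k$ decays geometrically, so the induced slack is absorbed into the $C_1\beta_k$ term. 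A related subtlety is that replacing the $k$-dependent contraction $1 - 2\alpha\delta(1+b_k)$ by the uniform $1-2\alpha\delta$ inside the momentum and noise summations is legitimate only because $b_k \ge 0$ makes the contraction stronger, not weaker.
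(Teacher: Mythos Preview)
Your proposal is correct and matches the paper's proof essentially line for line: the paper also expands $\|\theta_{k+1}-\theta^\star\|_2^2$, bounds the cross and quadratic terms crudely via $\|\theta_k-\theta_{k-1}\|_2\le D_{\max}$ and $\|g_k\|_2\le G_{\max}$ (your ``latter'' option for the second-order coupling), splits $g_k=\bar g(\theta_k)+(g_k-\bar g(\theta_k))$ to extract the contraction $(1-2\alpha\delta(1+b_k))$ and the bias term handled by Lemma~\ref{lem:bias}, and then unrolls using $b_k\ge 0$ to relax the product to $(1-2\alpha\delta)^{k-i}$ and $\alpha<(1-\lambda)/(2\delta)$ to sum the geometric series in $\lambda/(1-2\alpha\delta)$. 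The only cosmetic difference is that your schematic one-step inequality lumps the bias into a single $C_3\alpha\kappa$ term, whereas the actual bias from Lemma~\ref{lem:bias} also carries $\eta_1\tau^*\beta_{k-\tau^*}$ and $\eta_2\tau^*\alpha$ pieces that must be tracked separately through the unrolling to recover the stated constants; you acknowledged this earlier in your sketch, so this is not a gap.
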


Theorem \ref{thm:acclinearNoniid} indicates that the convergence behavior is determined by five terms. The first two terms capture the convergence rate as $T$ changes, indicating that with a constant learning rate, MomentumQ enjoys an exponential convergence rate to a neighborhood of the global optimum. Since $\prod_{i=0}^{T-1} (1 - 2\alpha\delta(1+b_i))<(1 - 2\alpha\delta)^{T}$, the dominant term of the convergence rate is the second term. The last three terms capture the convergence error. Since one usually chooses $\kappa = \alpha_k = \alpha$, the convergence error can be made as small as possible by choosing a sufficiently small learning rate. 

As a comparison, the convergence of the vanilla Q-learning under similar assumptions and Markovian sampling is obtained in \cite{Chen2019finiteQ} as $\mathbb{E}\norm{\theta_{\text{out}}-\theta^\star}_2^2\leq (1-2\delta \alpha)^T\norm{\theta_{0}-\theta^\star}_2^2 + \alpha C_1 + \kappa C_2$ for some constants $C_1,C_2$. Clearly, the dominant order in \eqref{eq:thmConstantPAQLNoniid} can have a smaller coefficient than that of the vanilla Q-learning by setting a small $\beta$, so that MomentumQ can enjoy a better convergence rate.

In addition, one can also observe that $\alpha,\beta$ control a set of tradeoffs. First, while smaller $\alpha$ yields a smaller convergence error, it also slows down the convergence rate. As for $\beta$, although smaller $\beta$ yields a smaller coefficient in the dominant term, it can also slow down the convergence rate because $b_i$ in the first term needs to be small. 

Next, we seek to remove the convergence error and balance the tradeoff caused by the choice of $a_k$. To this end, we can choose a diminishing learning rate and obtain the following theorem.
\begin{theorem}\label{thm:acclinearNoniidDimin}
    (MomentumQ with diminishing learning rate) Suppose that Assumptions \ref{asp:boundedPhi}-\ref{asp:markov} hold and fix $\kappa>0$. Let $a_k=\frac{\alpha}{\sqrt{k}}, b_k+c_k=\beta\lambda^k$ with $\alpha>0, \beta,\lambda\in (0,1)$. After running $T$ steps of Algorithm \ref{alg:linearPAQL} under Markovian sampling, we take the output $\theta_{\text{out}} = \frac{1}{T}\sum_{k=1}^T \theta_k$ and have
    \begin{equation}\nonumber
    \begin{aligned}
    \mathbb{E}\norm{\theta_{\text{out}}-\theta^\star}_2^2
    \leq& \frac{ D_{\max}^2/\alpha + 30\alpha G_{\max}^2 + 16\tau^*\alpha\eta_2}{2\delta\sqrt{T}} + \frac{8 D_{\max}G_{\max}\kappa}{\delta}\\
    &+ \frac{1}{T}\left[\frac{5\beta D_{\max}^2}{2\alpha\delta(1-\lambda)^2} + \frac{D_{\max}G_{\max}\beta \lambda + 4\tau^*\eta_1\beta\lambda}{\delta(1-\lambda)} \right],
    \end{aligned}
\end{equation}
    where $\eta_1,\eta_2$ are defined in Lemma \ref{lem:bias}.
\end{theorem}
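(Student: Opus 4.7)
\textbf{Proof plan for Theorem~\ref{thm:acclinearNoniidDimin}.} The strategy is to derive a per-step drift inequality for $u_k := \mathbb{E}\norm{\theta_k - \theta^\star}_2^2$, telescope it over $k=1,\dots,T$, and then apply Jensen's inequality on the averaged output together with the diminishing stepsize identities. I start by expanding
\[
\norm{\theta_{k+1} - \theta^\star}_2^2 = \norm{\theta_k - \theta^\star}_2^2 + 2(\theta_{k+1}-\theta_k)^T(\theta_k - \theta^\star) + \norm{\theta_{k+1}-\theta_k}_2^2
\]
and substituting the MomentumQ update $\theta_{k+1}-\theta_k = (b_k+c_k)(\theta_k-\theta_{k-1}) - a_k(1+b_k)g_k + a_k b_k g_{k-1}$. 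The cross-product term then splits into a Polyak/Nesterov momentum contribution, a current-step descent contribution, and a delayed-gradient contribution.

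Next, Assumption~\ref{asp:qlearning} extracts $-2\delta a_k(1+b_k)\norm{\theta_k-\theta^\star}_2^2$ from the current-step piece, leaving a Markovian bias term $-2a_k(1+b_k)(g_k - \bar g(\theta_k))^T(\theta_k-\theta^\star)$ whose expectation is controlled by Lemma~\ref{lem:bias} as $4D_{\max}G_{\max}\kappa + \eta_1\tau^\ast \beta_{k-\tau^\ast} + \eta_2\tau^\ast a_{k-\tau^\ast}$ for $k>\tau^\ast$. The squared-increment $\norm{\theta_{k+1}-\theta_k}_2^2$ is bounded by the triangle inequality together with $\norm{g_k}_2 \le G_{\max} = 2D_{\max}+R_{\max}$, yielding an $O(a_k^2 G_{\max}^2 + \beta_k^2 D_{\max}^2)$ contribution. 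The remaining momentum cross term $2(b_k+c_k)(\theta_k-\theta_{k-1})^T(\theta_k-\theta^\star)$ is bounded through Young's inequality and Assumption~\ref{asp:boundedDomain} to give a $\beta\lambda^k D_{\max}^2$ term.

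Summing and telescoping yields
\[
2\delta\sum_{k=1}^T a_k(1+b_k)u_k \le D_{\max}^2 + \sum_{k=1}^T \bigl(a_k^2 G_{\max}^2 + a_k\,\mathrm{bias}_k + \beta\lambda^k D_{\max}^2\bigr).
\]
With $a_k=\alpha/\sqrt{k}$ I use $\sum_{k=1}^T a_k \le 2\alpha\sqrt{T}$, the bound $a_k^2 \le \alpha a_k$ so that $\sum a_k^2 \le 2\alpha^2\sqrt{T}$, and the geometric estimate $\sum_k \lambda^k < 1/(1-\lambda)$ to fold the momentum and the bias-decay pieces into terms proportional to $\beta/(1-\lambda)$. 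Jensen's inequality applied to $\theta_{\text{out}}=(1/T)\sum_k \theta_k$ gives $\mathbb{E}\norm{\theta_{\text{out}}-\theta^\star}_2^2 \le (1/T)\sum_k u_k$, and using $a_k \ge \alpha/\sqrt{T}$ converts the weighted sum into $(1/T)\sum_k u_k \le (1/(\alpha\sqrt{T}))\sum_k a_k u_k$, delivering the $O(1/\sqrt{T})$ dominant rate, the persistent $\kappa$ error, and the $O(1/T)$ momentum remainder in the precise form stated.

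The main obstacle will be decoupling the Polyak and Nesterov contributions without inflating the dominant $1/\sqrt{T}$ rate; this relies critically on the exponentially-decaying schedule $b_k+c_k=\beta\lambda^k$ being absolutely summable, and demands a careful Young's inequality that trades $\norm{\theta_k-\theta_{k-1}}_2^2$ against $\norm{\theta_k-\theta^\star}_2^2$ without eroding the strong-monotonicity descent. A secondary difficulty is that the Markovian bias bound of Lemma~\ref{lem:bias} is indexed at the shifted time $k-\tau^\ast$, so the sums $\sum_k a_k a_{k-\tau^\ast}$ and $\sum_k a_k \beta_{k-\tau^\ast}$ must be reduced to $\sum_k a_k^2$ and $\sum_k \beta_k$ respectively, to recover the $\tau^\ast\alpha\eta_2$ and $\tau^\ast\eta_1\beta$ coefficients appearing in the stated bound.
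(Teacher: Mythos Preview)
Your overall architecture---expand the squared error, extract the descent via Assumption~\ref{asp:qlearning}, control the Markovian bias via Lemma~\ref{lem:bias}, telescope, and finish with Jensen---is correct and matches the paper's proof through the per-step drift inequality.

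The gap is in how you pass from the per-step inequality to the averaged bound. You propose to telescope directly, obtaining $2\delta\sum_k a_k(1+b_k)u_k \le D_{\max}^2 + \cdots$, and then to ``unweight'' via $a_k \ge \alpha/\sqrt{T}$, i.e.\ $(1/T)\sum_k u_k \le (1/(\alpha\sqrt{T}))\sum_k a_k u_k$. This unweighting step loses a factor of $\sqrt{T}$ on the right-hand side. Concretely, with your stated estimate $\sum_k a_k^2 \le 2\alpha^2\sqrt{T}$, the variance contribution $\sum_k a_k^2 G_{\max}^2$ becomes $\alpha G_{\max}^2/\delta$ after dividing by $2\delta\alpha\sqrt{T}$---a non-vanishing constant, not the $30\alpha G_{\max}^2/(2\delta\sqrt{T})$ claimed in the theorem. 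The same defect hits the $\eta_2$ bias term (your product sums $\sum_k a_k a_{k-\tau^*}$ reduce at best to $\sum_k a_k^2$, which again gives a constant after your unweighting), and it degrades the momentum remainder from $O(1/T)$ to $O(1/\sqrt{T})$.

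The paper's remedy is to divide the per-step inequality by $a_k$ \emph{before} summing. The left side then becomes the \emph{unweighted} $2\delta\sum_k u_k$, so that division by $T$ applies directly after Jensen with no loss. The telescoping piece $\sum_k (u_k - u_{k+1})/a_k$ is handled by Abel summation together with the crude bound $u_k \le D_{\max}^2$, yielding $D_{\max}^2/a_T = D_{\max}^2\sqrt{T}/\alpha$. After dividing by $a_k$, the $a_k^2 G_{\max}^2$ term becomes $a_k G_{\max}^2$, which sums to $O(\sqrt{T})$; the bias term $a_k\cdot\eta_2\tau^* a_{k-\tau^*}$ becomes simply $\eta_2\tau^* a_{k-\tau^*}$, again summable to $O(\sqrt{T})$; and the momentum term $\beta_k D_{\max}^2$ becomes $(\beta_k/a_k)D_{\max}^2 = (\beta/\alpha)\sqrt{k}\,\lambda^k D_{\max}^2$, which is still summable to a constant since $\sum_k k\lambda^k \le 1/(1-\lambda)^2$---this is precisely what produces the $O(1/T)$ momentum remainder in the stated bound. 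Your treatment of the momentum cross term via Young/Cauchy--Schwarz is fine; it is only the summation route that needs to change.
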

In Theorem \ref{thm:acclinearNoniidDimin}, if we choose $\kappa = \alpha_k = \alpha/\sqrt{T}$, then the mixing time $\tau^*=\mathcal{O}(\log T)$. Thus, MomentumQ converges to the global optimum at a rate of $\mathcal{O}(\log T/\sqrt{T})$ under a diminishing learning rate.

\subsection{Tabular MomentumQ}
In this subsection, we provide the finite-sample analysis for tabular MomentumQ as listed in~\Cref{alg:AQL}. As we mention in Section \ref{sec:momentumQ}, MomentumQ combines different types of momentum terms dynamically. This requires substantial new technical developments here in the convergence analysis.  

We assume that the state space $\mathcal{X}$ and the action space $\mathcal{U}$ are finite with cardinalities $|\mathcal{X}|$ and $|\mathcal{U}|$, respectively. We denote $n=|\mathcal{X}|\cdot|\mathcal{U}|$. We also need the following assumption in our analysis.
\begin{assumption}\label{asp:boundQ}
    The Q-function is uniformly bounded throughout the learning process. That is, $\exists V_{\max}$, such that $\norm{Q_k}\leq V_{\max},\forall k\geq0$. 
\end{assumption}
Note that it is nontrivial to show the boundedness of the proposed iteration scheme. In fact, it is usually assumed for proving convergence of many such complicated stochastic approximation algorithms~\citep{kushner2003stochastic}. Alternatively, one can extend the ODE method~\citep{borkar2000ode} considerably to show the boundedness, which we left for our future work. 

To facilitate the analysis, we rewrite~\eqref{eq: acc tabular Q-learning} in a more compact form as
\begin{equation}\label{eq:compactAQL}
    \begin{aligned}
    Q_{k+1}=  (1\!-\!a_{k})Q_{k}\!+\!\left[b_{k}(1-a_{k})\!+\!c_{k}\right](Q_{k}-Q_{k-1})  +a_{k}\left[(1+b_{k})\hat{\mathcal{T}}_{k}Q_{k}-b_{k}\hat{\mathcal{T}}_{k}Q_{k-1}\right].
    \end{aligned}
\end{equation}
Our analysis first bounds the errors of approximating the exact Bellman operator $\mathcal{T}$ with empirical Bellman operators $\hat{\mathcal{T}}_k$. For convenience, we denote the $\hat{\mathcal{T}}_k$ terms in~\eqref{eq:compactAQL} by
\begin{equation} 
    \mathcal{D}_k\left[Q_{k},Q_{k-1}\right]:=(1+b_k)\hat{\mathcal{T}}_{k}Q_k-b_k\hat{\mathcal{T}}_{k}Q_{k-1}, \label{eq:Dk}
\end{equation}
for all $k\geq 0$. Note that $\mathcal{D}_{k}$ is a function of all samples $\{y_1,y_2,\cdots, y_k\}$ for all state-action pairs $(x,u)$ up to round $k$. Let $\mathcal{F}_k$ denote the filtration generated by the sequence of these random variables $\{y_1,y_2,\cdots, y_k\}$. We see that $\mathcal{D}_{k}\in \mathcal{F}_{k}$ and $Q_{k+1}\in\mathcal{F}_k$.
Then if we define $\mathcal{D}\left[Q_{k},Q_{k-1}\right]$ as the conditional expectation of $\mathcal{D}_k\left[Q_{k},Q_{k-1}\right]$ given $\mathcal{F}_{k-1}$, we obtain by the definition of $\mathcal{T}$ that
\begin{align}
\mathcal{D}\left[Q_{k},Q_{k-1}\right]
:= \mathbb{E}_{P}\left(\mathcal{D}_{k}\left[Q_{k},Q_{k-1}\right]|\mathcal{F}_{k-1}\right) = (1+b_k)\mathcal{T}Q_{k} -b_k\mathcal{T}Q_{k-1}.\label{eq:exactD}\nonumber 
\end{align}

Now define the error between $\mathcal{D}_k$ and $\mathcal{D}$ as follows:
\begin{equation}\label{eq:epsilonk}
    \epsilon_{k}:=\mathcal{D}\left[Q_{k},Q_{k-1}\right]-\mathcal{D}_{k}\left[Q_{k},Q_{k-1}\right].
\end{equation}
Clearly $\mathbb{E}_{P}\left(\epsilon_{k}|\mathcal{F}_{k-1}\right)=0$.
This shows that $\forall(x,u)\in\mathcal{X}\times U(x)$, the sequence
of the estimation errors $\left\{ \epsilon_{k}(x,u)\right\} _{k=0}^T$
is a martingale difference sequence with respect to the filtration $\mathcal{F}_{k}$.
In other words, if we denote 
\begin{equation}\label{eq:Ek}
    E_{k}(x,u):=\sum_{j=0}^{k}\epsilon_{j}(x,u),
\end{equation}
then $E_k$ is a martingale sequence with respect to $\mathcal{F}_{k},$
$\forall(x,u)\in\mathcal{X}\times U(x)$ and $\forall k\geq0$. 


The following proposition provides the uniform bounds of $\mathcal{D}_k$ and $\epsilon_k$. 
\begin{proposition}\label{lem:boundDk}
Suppose Assumption~\ref{asp:boundQ} holds. Consider MomentumQ as in Algorithm~\ref{alg:AQL}. Then the terms $\mathcal{D}_{k}\left[Q_{k},Q_{k-1}\right]$ defined in \eqref{eq:Dk} and $\epsilon_{k}$ in \eqref{eq:epsilonk}
are uniformly bounded for all $k\geq0$. Specifically, $\exists \bar{D}>0$, s.t. $\norm{\mathcal{D}_{k}[Q_{k},Q_{k-1}]}\leq \bar{D}$,$\norm{\epsilon_{k}}\leq 2\bar{D}, \forall k\geq 0$.
\end{proposition}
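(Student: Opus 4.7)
The plan is to exploit a hidden algebraic cancellation among the coefficients $(a_k,b_k,c_k)$ chosen in Algorithm~\ref{alg:AQL} that collapses the three-term update~\eqref{eq:compactAQL} into a clean one-line recursion, and then run an induction that uses the discount factor $\gamma<1$ to close the bound. Substituting $a_k = 1/(k+1)$, $b_k = k-m-1$, and $c_k = (-k^2+(m+1)k+1)/(k+1)$ directly into~\eqref{eq:compactAQL}, one verifies the identity $b_k(1-a_k)+c_k = 1/(k+1) = a_k$, so the update simplifies to
\begin{equation*}
Q_{k+1} = Q_k + \frac{1}{k+1}\bigl(\mathcal{D}_k[Q_k,Q_{k-1}] - Q_{k-1}\bigr).
\end{equation*}
Together with Assumption~\ref{asp:boundQ}, this immediately yields the decay estimate $\norm{Q_k - Q_{k-1}} \le \frac{1}{k}\bigl(\norm{\mathcal{D}_{k-1}} + V_{\max}\bigr)$ for $k\ge 1$, i.e.\ consecutive iterates differ by at most $O(1/k)$ times the current magnitude of $\mathcal{D}_{k-1}$.

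Next I would rewrite $\mathcal{D}_k$ in its momentum-revealing form $\mathcal{D}_k = \hat{\mathcal{T}}_k Q_k + b_k\bigl(\hat{\mathcal{T}}_k Q_k - \hat{\mathcal{T}}_k Q_{k-1}\bigr)$. Because the same sample $y_k$ is fed into both empirical Bellman operators, a pointwise application of the $\gamma$-contraction~\eqref{eq:Contraction} gives $\norm{\hat{\mathcal{T}}_k Q_k - \hat{\mathcal{T}}_k Q_{k-1}} \le \gamma\norm{Q_k - Q_{k-1}}$, so together with the trivial bound $\norm{\hat{\mathcal{T}}_k Q_k} \le R_{\max}+\gamma V_{\max}$ and the decay estimate above, I obtain for $k\ge 1$
\begin{equation*}
\norm{\mathcal{D}_k} \;\le\; (R_{\max}+\gamma V_{\max}) + \frac{|k-m-1|}{k}\,\gamma\bigl(\norm{\mathcal{D}_{k-1}} + V_{\max}\bigr).
\end{equation*}
For all $k\ge m+2$ the ratio $|k-m-1|/k$ is strictly below $1$, so this reduces to the contracting one-step recursion $\norm{\mathcal{D}_k}\le (R_{\max}+2\gamma V_{\max}) + \gamma\norm{\mathcal{D}_{k-1}}$. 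Seeding the induction with the crude bound $\norm{\mathcal{D}_j}\le (2|b_j|+1)(R_{\max}+\gamma V_{\max})$ for the finitely many early indices $j\le m+1$ and unrolling the geometric series yields the uniform bound
\begin{equation*}
\norm{\mathcal{D}_k} \;\le\; \frac{R_{\max}+2\gamma V_{\max}}{1-\gamma} + \norm{\mathcal{D}_{m+1}} \;=:\; \bar{D}.
\end{equation*}

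For $\epsilon_k$, note that $\mathcal{D}[Q_k,Q_{k-1}]$ is the conditional expectation of $\mathcal{D}_k[Q_k,Q_{k-1}]$ given $\mathcal{F}_{k-1}$, so Jensen's inequality applied to the convex supremum norm gives $\norm{\mathcal{D}[Q_k,Q_{k-1}]}\le \bar{D}$; the triangle inequality then delivers $\norm{\epsilon_k}\le 2\bar{D}$. The main obstacle is conceptual rather than computational: because $|b_k|=|k-m-1|$ diverges linearly, the naive triangle bound $\norm{\mathcal{D}_k}\le(|1+b_k|+|b_k|)(R_{\max}+\gamma V_{\max})$ is of order $k$ and therefore useless. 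Only by recognizing that the particular coefficient choice in Algorithm~\ref{alg:AQL} forces the consecutive-iterate gap to shrink at rate $1/k$, exactly canceling the linear growth of $b_k$, and then invoking the contraction factor $\gamma<1$ to close the induction, can one obtain the uniform bound claimed in the proposition.
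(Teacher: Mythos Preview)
Your proposal is correct and follows essentially the same route as the paper's proof: both exploit the identity $b_k(1-a_k)+c_k=a_k$ to collapse~\eqref{eq:compactAQL} into $Q_{k+1}=Q_k+a_k(\mathcal{D}_k-Q_{k-1})$, feed the resulting $O(1/k)$ bound on $\|Q_k-Q_{k-1}\|$ back into a decomposition of $\mathcal{D}_k$, and obtain the contracting recursion $\|\mathcal{D}_k\|\le (R_{\max}+2\gamma V_{\max})+\gamma\|\mathcal{D}_{k-1}\|$ for large $k$, with the finitely many early indices handled crudely. The only cosmetic differences are that the paper uses the splitting $(1+b_k)A-b_kB=B+(1+b_k)(A-B)$ (yielding coefficient $|k-m|/k$ and threshold $k\ge m/2$) whereas you use $A+b_k(A-B)$ (yielding $|k-m-1|/k$ and threshold $k\ge m+2$); the $\epsilon_k$ bound via the triangle inequality is identical.
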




The uniform bounds proved in Proposition~\ref{lem:boundDk} are critical in the derivation of the main theorem below.
\begin{theorem}\label{thm:main}
    Suppose Assumption~\ref{asp:boundQ} holds. Consider Algorithm~\ref{alg:AQL} where $m\geq 1/\gamma$. Then, with probability at least $1-\delta$, the output of MomentumQ satisfies for $T>m$:
    \begin{equation}\label{eq:thm}
    \begin{aligned}
        \left\Vert Q^{\star}\!-\!Q_{T}\right\Vert\!\leq\!\frac{\tilde{h}V_{\max} \!+\!\bar{D}\sqrt{8(T-\lfloor m \rfloor -1)\log \frac{2n}{\delta}}}{T(1\!-\!\gamma)},
        \end{aligned}
    \end{equation}
    where $\tilde{h}=2\gamma(m+\lfloor m \rfloor+2)+2$, $\bar{D}$ is specified in Proposition~\ref{lem:boundDk}, and $\lfloor m \rfloor$ denotes the largest integer that does not exceed $m$.
\end{theorem}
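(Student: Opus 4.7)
The plan is to reduce the proof to a contraction-type inequality plus a martingale tail bound. First, substituting the specific choices $a_k=1/(k+1)$, $b_k=k-m-1$, and the stated $c_k$ into the compact update \eqref{eq:compactAQL} and observing that $b_k(1-a_k)+c_k=a_k$, the recursion collapses to $(k+1)Q_{k+1}-kQ_k=(Q_k-Q_{k-1})+\mathcal{D}_k[Q_k,Q_{k-1}]$. Summing this identity from $k=k_0:=\lfloor m\rfloor+1$ up to $k=T-1$, the $(Q_k-Q_{k-1})$ piece telescopes cleanly to $Q_{T-1}-Q_{\lfloor m\rfloor}$; and writing $\mathcal{D}_k=(k-m)\mathcal{T}Q_k-(k-m-1)\mathcal{T}Q_{k-1}-\epsilon_k$, the $\mathcal{T}$-part also telescopes (the coefficients $k-m$ and $k-m-1$ differ by one) to $(T-1-m)\mathcal{T}Q_{T-1}+(m-\lfloor m\rfloor)\mathcal{T}Q_{\lfloor m\rfloor}$, with remainder the martingale sum $\sum_{k=k_0}^{T-1}\epsilon_k$. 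Using $\mathcal{T}Q^\star=Q^\star$ and the arithmetic identity $T=k_0+(T-1-m)+(m-\lfloor m\rfloor)$ to cancel the $Q^\star$-coefficients yields
$$T(Q^\star\!-\!Q_T)=k_0(Q^\star\!-\!Q_{k_0})+(T\!-\!1\!-\!m)(\mathcal{T}Q^\star\!-\!\mathcal{T}Q_{T-1})+(m\!-\!\lfloor m\rfloor)(\mathcal{T}Q^\star\!-\!\mathcal{T}Q_{\lfloor m\rfloor})+(Q_{\lfloor m\rfloor}\!-\!Q_{T-1})+\sum_{k=k_0}^{T-1}\epsilon_k.$$

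Next, I subtract $T\mathcal{T}Q_{T-1}$ from the telescoped equation and regroup; the $Q^\star$-coefficients again cancel, and after applying the $\gamma$-contraction of $\mathcal{T}$ together with Assumption~\ref{asp:boundQ} (which supplies $\|Q^\star-Q_k\|\le 2V_{\max}$ for every $k$), I obtain the per-step estimate
$$T\|Q_T-\mathcal{T}Q_{T-1}\|\le 2V_{\max}\bigl[(\lfloor m\rfloor+2)+(m+1+\eta)\gamma\bigr]+\Bigl\|\sum_{k=k_0}^{T-1}\epsilon_k\Bigr\|,$$
where $\eta=m-\lfloor m\rfloor\in[0,1)$. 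Combining this with the contraction-plus-triangle inequality $\|Q^\star-Q_k\|\le\gamma\|Q^\star-Q_{k-1}\|+\|Q_k-\mathcal{T}Q_{k-1}\|$ and unrolling from $k_0$ up to $T$ produces $\|Q^\star-Q_T\|\le\gamma^{T-k_0}\|Q^\star-Q_{k_0}\|+\sum_{k=k_0+1}^{T}\gamma^{T-k}\|Q_k-\mathcal{T}Q_{k-1}\|$. The key numerical lemma $\sum_{k=k_0+1}^{T}\gamma^{T-k}/k=O\bigl(1/(T(1-\gamma))\bigr)$, obtained by splitting the sum around $k\approx T/2$ and bounding the two halves separately, converts the $1/k$ factors from the per-step bound into the target $1/(T(1-\gamma))$ decay; bookkeeping the constants produces the precise $\tilde h V_{\max}/(T(1-\gamma))$ term, and the exponentially small $\gamma^{T-k_0}\|Q^\star-Q_{k_0}\|$ is absorbed.

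Finally, because $\{\epsilon_k\}$ is a martingale difference sequence with respect to $\{\mathcal{F}_k\}$ (as noted after \eqref{eq:epsilonk}) with uniform coordinate-wise bound $|\epsilon_k(x,u)|\le 2\bar D$ by Proposition~\ref{lem:boundDk}, Azuma-Hoeffding combined with Doob's maximal inequality (to control the partial-sum suprema appearing in the iteration) and a union bound over the $n=|\mathcal{X}||\mathcal{U}|$ state-action pairs yields $\max_{k_0\le K<T}\|\sum_{k=k_0}^{K}\epsilon_k\|\le\bar D\sqrt{8(T-\lfloor m\rfloor-1)\log(2n/\delta)}$ with probability at least $1-\delta$, which when plugged into the iteration delivers the second term in \eqref{eq:thm}. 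The hardest step will be deriving the per-step bound cleanly: starting from the telescoped equation, the coefficient $(T-1-m)$ of the $(\mathcal{T}Q^\star-\mathcal{T}Q_{T-1})$ term grows linearly in $T$, so any direct estimate using $\|Q^\star-Q_{T-1}\|\le 2V_{\max}$ would leave a non-vanishing $2\gamma V_{\max}$ residue; the fix is the reformulation to $\|Q_T-\mathcal{T}Q_{T-1}\|$ above, in which the coefficient of $\mathcal{T}Q_{T-1}$ becomes $-(m+1)$ and is independent of $T$, so that the single-step contraction together with the geometric sum in the previous paragraph finally drives the recursion to the desired $1/(T(1-\gamma))$ rate rather than a vacuous constant.
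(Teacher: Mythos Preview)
Your approach is correct and delivers the right $O(1/(T(1-\gamma)))$ rate, but it takes a genuinely different route from the paper and your claim of recovering the precise constant $\tilde h$ does not hold. The paper telescopes from $k=0$ (equivalently, proves by induction) to obtain $kQ_k=Q_{k-1}-Q_0+(k-m-1)\mathcal{T}Q_{k-1}+(m+1)\mathcal{T}Q_0-E_{k-1}$ (Lemma~\ref{lem:dynQ}), then writes $k(Q^\star-Q_k)$ directly and uses the hypothesis $m\ge 1/\gamma$ to get $(k-m-1)\gamma+1\le(k-1)\gamma$; this yields the clean one-step recursion $k\|Q^\star-Q_k\|\le\gamma(k-1)\|Q^\star-Q_{k-1}\|+2hV_{\max}+\|E_k\|$ (Lemma~\ref{lem:errorProp}), which unrolls multiplicatively so that the $1/T$ prefactor appears immediately and the martingale term enters with coefficient exactly $\frac{1}{T}\sum_i\gamma^i\le\frac{1}{T(1-\gamma)}$. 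Your detour instead bounds $\|Q_k-\mathcal{T}Q_{k-1}\|$ by throwing away the $\|Q^\star-Q_{k-1}\|$ contribution via the crude $2V_{\max}$ estimate (so the hypothesis $m\ge1/\gamma$ is never invoked), then feeds this into the generic contraction $\|Q^\star-Q_k\|\le\gamma\|Q^\star-Q_{k-1}\|+\|Q_k-\mathcal{T}Q_{k-1}\|$ and relies on the numerical lemma $\sum_k\gamma^{T-k}/k=O(1/(T(1-\gamma)))$. This works, but the deterministic constant you obtain is of the form $2[(\lfloor m\rfloor+3)+(m+1+\eta)\gamma]$ plus an absorbed exponential piece, not $\tilde h=2\gamma(m+\lfloor m\rfloor+2)+2$, and since $\sum_{i\ge0}\gamma^i/(T-i)$ is only $\frac{1}{T(1-\gamma)}(1+o(1))$, the martingale coefficient you get is strictly larger than $1$ for finite $T$. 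What your route buys is that it does not require $m\ge1/\gamma$; what the paper's route buys is the exact stated constants with no asymptotic slack or splitting lemma. If you want the inequality \eqref{eq:thm} verbatim, telescope from $k=0$ and run the recursion on $k\|Q^\star-Q_k\|$ directly.
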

\begin{proof}[Proof Sketch for Theorem \ref{thm:main}] The proof of Theorem~\ref{thm:main} is facilitated by several lemmas proved in the appendix. The sketch of the proof is as follows. We first derive the formula of $Q_{k+1}$ in terms of the exact Bellman operator $\mathcal{T}$ and the accumulated approximation error $E_k$ (Lemma~\ref{lem:dynQ}). Then we bound the learning error $\norm{Q_k-Q^{\star}}$ by some constants and the sum of discounted errors $\norm{E_k}$, where we used the contraction of $\mathcal{T}$, the boundedness of $Q_k$, and the assumption $m\geq\frac{1}{\gamma}$ (Lemma~\ref{lem:errorProp}). Finally, we bound the martingale error terms probabilistically using the maximal Hoeffding-Azuma inequality (Lemma~\ref{lem:MHAineq}) and obtain the finite time convergence error in~\eqref{eq:thm}.
\end{proof}

Since $y_k$, $k=0,1,2,...$ are independently sampled, using the second Borel–Cantelli lemma, we immediately have the following corollary.
\begin{corollary}\label{col:main}
$Q_k$ converges to $Q^\star$ almost surely at a rate of at least $\mathcal{O}(\frac{\sqrt{(T-\lfloor m \rfloor -1)\log\frac{2n}{\delta}}}{(1-\gamma)^{2}T})$.
\end{corollary}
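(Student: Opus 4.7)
The plan is to upgrade the high-probability finite-sample bound of Theorem \ref{thm:main} into an almost-sure convergence rate by a Borel-Cantelli argument, exploiting that the samples $\{y_k\}_{k\geq 0}$ used to form the empirical Bellman operators $\hat{\mathcal{T}}_k$ are drawn independently across iterations.

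Concretely, I would instantiate Theorem \ref{thm:main} with a time-varying confidence parameter $\delta_T := \delta / T^{2}$ for a fixed target $\delta \in (0,1)$, and define the bad event
\begin{equation*}
    A_T := \left\{ \norm{Q^\star - Q_T} > \frac{\tilde{h} V_{\max} + \bar{D}\sqrt{8(T - \lfloor m \rfloor - 1)\log(2nT^{2}/\delta)}}{T(1-\gamma)} \right\}.
\end{equation*}
By the theorem, $\mathbb{P}(A_T) \leq \delta_T$, so $\sum_{T > m}\mathbb{P}(A_T) \leq \delta \sum_{T > m} T^{-2} < \infty$. The first Borel-Cantelli lemma then yields $\mathbb{P}(A_T \text{ infinitely often}) = 0$. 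Hence almost surely there exists a (random) $T_{0}$ such that $A_T^{c}$ holds for every $T \geq T_{0}$. Expanding $\log(2nT^{2}/\delta) = \log(2n/\delta) + 2\log T$, absorbing the resulting $\sqrt{\log T}$ into the $\mathcal{O}$-notation, and using $V_{\max} = \mathcal{O}(1/(1-\gamma))$ to produce the second $(1-\gamma)^{-1}$ factor, gives exactly the stated almost-sure rate.

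The main subtlety is largely one of wording. The corollary cites the second Borel-Cantelli lemma, but the argument in fact uses the first: what is needed is summability of failure probabilities, which rules out infinitely many bad iterations; independence of the $\{y_k\}$ is incidental rather than essential here. A secondary technical point is the tradeoff in choosing $\delta_T$: it must decay fast enough to be summable yet slowly enough that the induced $\sqrt{\log(1/\delta_T)}$ factor does not spoil the rate. Any polynomial schedule $\delta_T = c T^{-p}$ with $p > 1$ works, inflating the bound only by a $\sqrt{\log T}$ factor that is absorbed into the stated $\mathcal{O}(\cdot)$. No additional concentration machinery is required; once the Borel-Cantelli step is in place, the corollary is an essentially immediate consequence of Theorem \ref{thm:main}.
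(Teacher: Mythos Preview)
Your proposal is correct and follows the same Borel--Cantelli route the paper gestures at; the paper's own proof is a single sentence, and you have supplied the standard details (a summable schedule $\delta_T$, then first Borel--Cantelli). Your observation that the \emph{first} Borel--Cantelli lemma is what is actually needed---summability of failure probabilities, with independence of the $y_k$'s playing no role---is accurate and worth keeping; the paper's citation of the second lemma appears to be a slip. One small point: the extra $(1-\gamma)^{-1}$ in the corollary's rate comes through $\bar{D}$ (which scales like $V_{\max}/(1-\gamma)$ via Proposition~\ref{lem:boundDk}), not through the $\tilde{h}V_{\max}$ term, so you may want to adjust that sentence, but the conclusion is unchanged.
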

This rate is slightly better than $\mathcal{O}(\frac{\sqrt{\log\frac{2n}{\delta}}}{(1-\gamma)^{2}\sqrt{T}})$ of SpeedyQ due to the presence of $m>1$.


\section{Experiments}\label{sec:exp} 

We evaluate the performance of the proposed MomentumQ and compare it with other related Q-learning algorithms over a series of FrozenLake games (see Appendix \ref{app:frozen} for further specifications of the FrozonLake problem). We present the empirical results for tabular MomentumQ and MomentumQ with linear function approximation in Sections~\ref{subsec:taql exp} and~\ref{subsec:paql exp}, respectively.


\subsection{Experiments on Tabular MomentumQ}\label{subsec:taql exp}

We compare our MomentumQ with two other existing momentum-based Q-learning algorithms: SpeedyQ proposed in~\cite[Algorithm 1]{azar2011speedy} and the Nesterov stochastic approximation (NeSA) algorithm proposed in \cite[eq.\ (5) with $\zeta=0.1$]{devraj2019nesa}. We also include the vanilla Q-learning algorithm in our comparison.

The experimental settings in this section are consistent with those of MomentumQ in Algorithm~\ref{alg:AQL} and SpeedyQ in~\cite[Algorithm 1]{azar2011speedy}. Thus the numerical results should be able to give a convincing comparison between two algorithms. It is worth mentioning that the tabular MomentumQ has an additional hyperparameter $m$ that can take a wide range of values (recall $m\geq\frac{1}{\gamma}$). We experiment with several different $m$'s. For relatively large $m$ values (e.g., when $m>10$), the learning rates are shifted to step from $1/(m+1)$, that is, $\alpha_k = 1/(m+k+1)$, for $k=0,1,2,...$. This is to avoid the large errors accumulated from initial iterations when $b_k<0$, which are reflected in the constants in \eqref{eq:thm}. Note that this shift does not change the obtained theoretical order of the convergence rate. We observe stable and often times better performance in convergence across different tests, which also aligns with the theoretical analysis. 

Considering the randomness embedded in MDP of both FrozenLake games, we evaluate the performance of each algorithm with 20 different random seeds and then illustrate the average loss and standard deviation in Fig.~\ref{fig: frozenlake comparison} and Fig.~\ref{fig: frozenlake8 comparison}. For evaluation purpose, we have access to the true transition probability, and can find the ground truth optimal Q-function $Q^{\star}$ using dynamic programming. In both games, the loss at step $k$ is then defined as $\norm{Q_k - Q^\star}$. It can be seen from the results that MomentumQ with various choices of $m$ all can converge faster than the vanilla tabular Q-learning and Speedy Q-learning. It is showing competitive performance against NeSA with smaller variance presented. Note that the high variance observed in the NeSA training aligns with the previous reported results from~\cite{devraj2019nesa} under different tasks.

\begin{figure}[h]
\begin{subfigure}{.99\textwidth}
  \centering
  \includegraphics[trim={3cm 0 4cm 2cm},clip,scale=0.15]{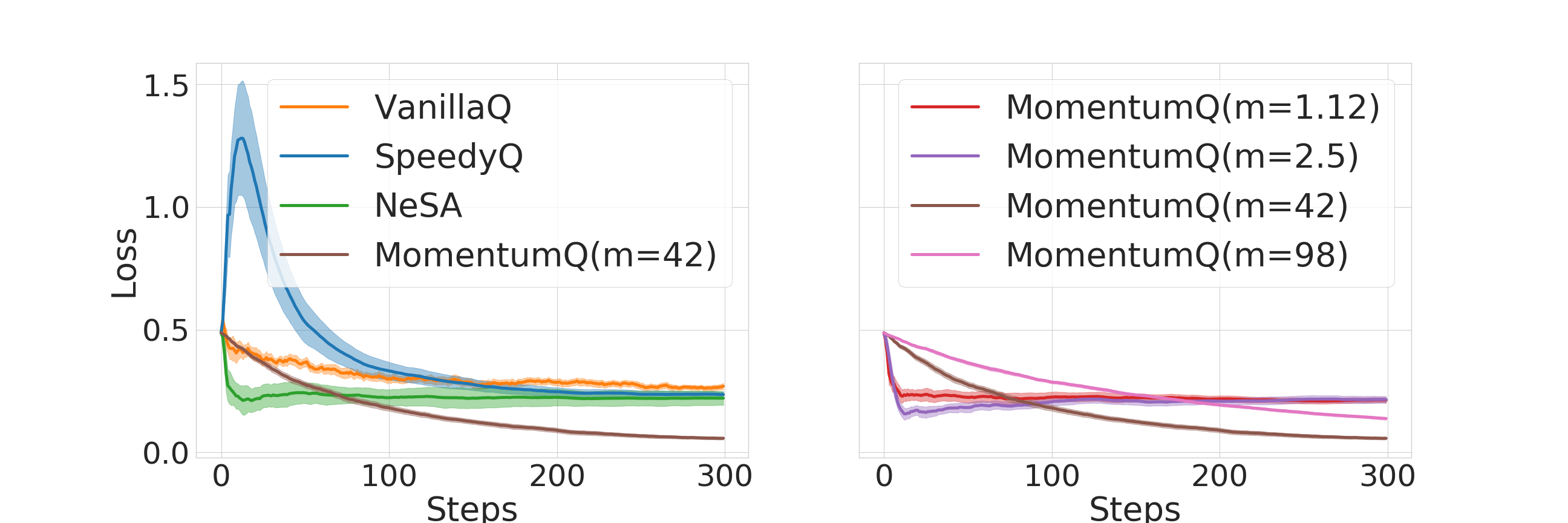}
  \caption{FrozenLake-$4 \times 4$}
  \label{fig: frozenlake comparison}
\end{subfigure}
\begin{subfigure}{.99\textwidth}
  \centering
  \includegraphics[trim={3cm 0 4cm 2cm},clip,scale=0.15]{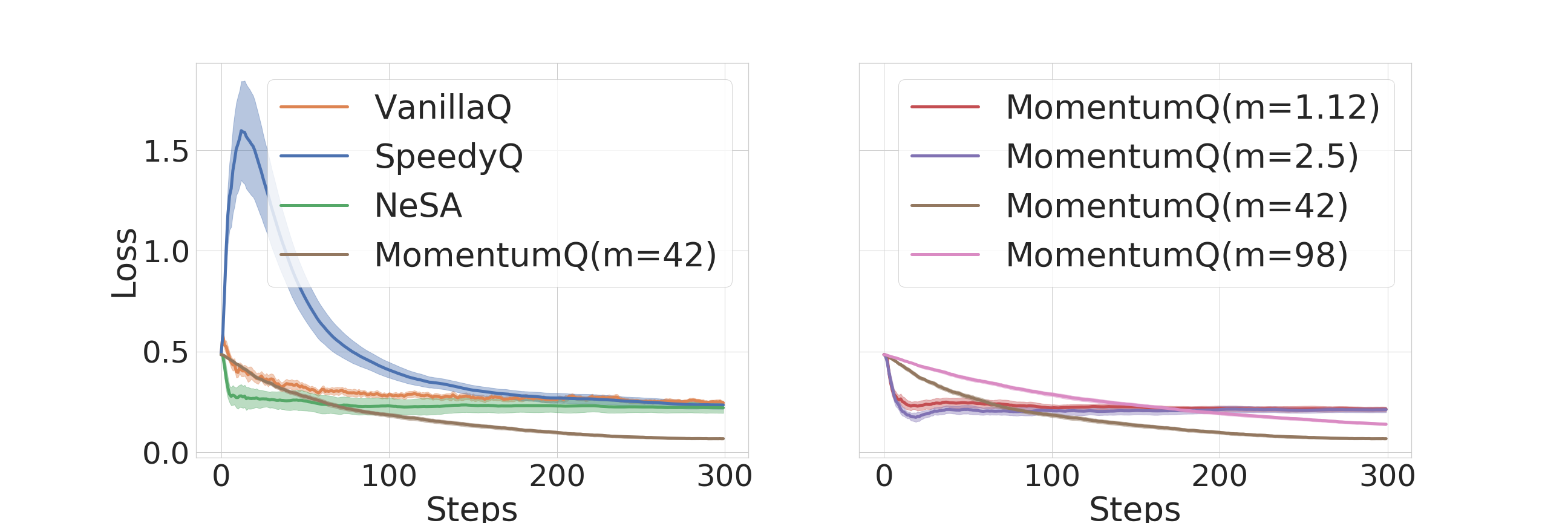}
  \caption{FrozenLake-$8 \times 8$}
  \label{fig: frozenlake8 comparison}
\end{subfigure}
\caption{Comparing MomentumQ with NeSA, SpeedQ, and VanillaQ.}
\label{fig:aql comp}
\vspace{-0.1in}
\end{figure}

\subsection{Experiments on MomentumQ with Function Approximation}\label{subsec:paql exp}

We adopt the FrozenLake-$128 \times 128$ as the benchmark task to evaluate the performance of MomentumQ with linear function approximation and compare it with the vanilla Q-learning (referred to as VanillaQ). Both algorithms are evaluated with different learning rate schemes (constant \& diminishing stepsize), as well as different sampling strategies (i.i.d.\ and Markovian). We note that SpeedyQ and NeSA have been proposed in the literature only for the tabular setting and are thus not included here for comparison.

Note that the i.i.d.\ sampling is an ideal assumption and cannot be satisfied perfectly in practice. For our implementation, we perform i.i.d.\ sampling strategy in a similar fashion to the experience replay~\citep{mnih2013playing} typically used for DQN training. A data buffer, referred to as the experience, is accumulated with data points collected across multiple training steps in the past. At each training step, the training data is then randomly uniformly sampled from the data buffer. In contrast, the Markovian sampling takes the training samples in an ``on-policy" manner where the collected data points are fed in to the Q-learning process right after.

At step $k$, the performance of the algorithm is evaluated through the total return of 150 rounds of trials. Similarly to the tabular setup, we execute each algorithm 20 times with different random seeds and illustrate the average return and standard deviation in Fig~\ref{fig: iid comparison} with i.i.d.\ sampling and Fig~\ref{fig: noniid comparison} with Markovian sampling.

Overall, the MomentumQ algorithm has exhibited superior performance than the vanilla Q-learning. In particular, training with i.i.d.\ sampling is significantly faster than the Markovian sampling, which can be also expected from our theoretical results. Within the same sampling strategy, MomentumQ is also faster in convergence than the vanilla Q-learning with the same learning rate scheme.

\begin{figure}[h]
\begin{subfigure}{.49\textwidth}
  \centering
  \includegraphics[trim={4cm 0 6cm 2cm},clip,scale=0.12]{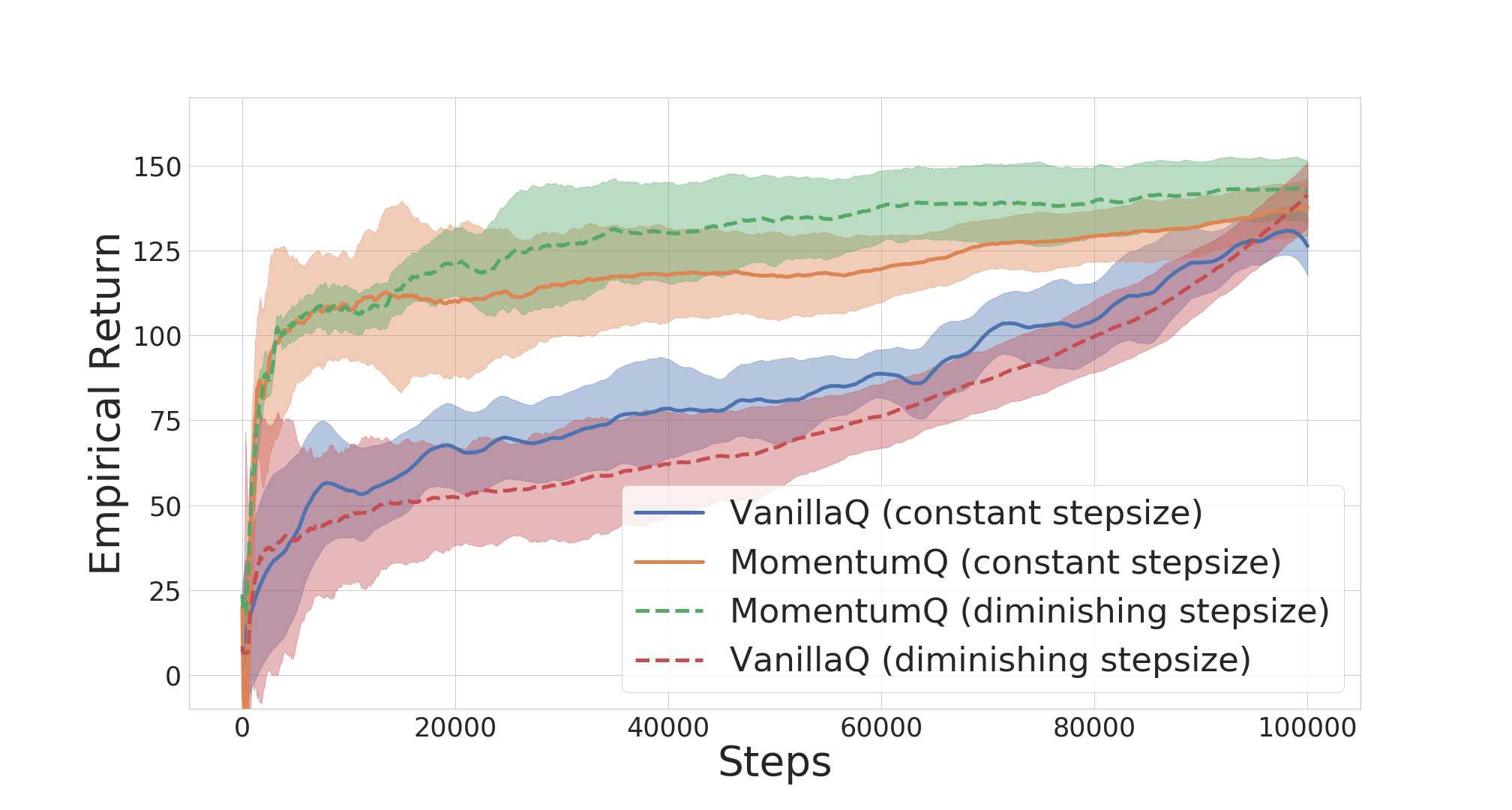}
  \caption{i.i.d sampling}
  \label{fig: iid comparison}
\end{subfigure}
\begin{subfigure}{.49\textwidth}
  \centering
  \includegraphics[trim={4cm 0 6cm 2cm},clip,scale=0.12]{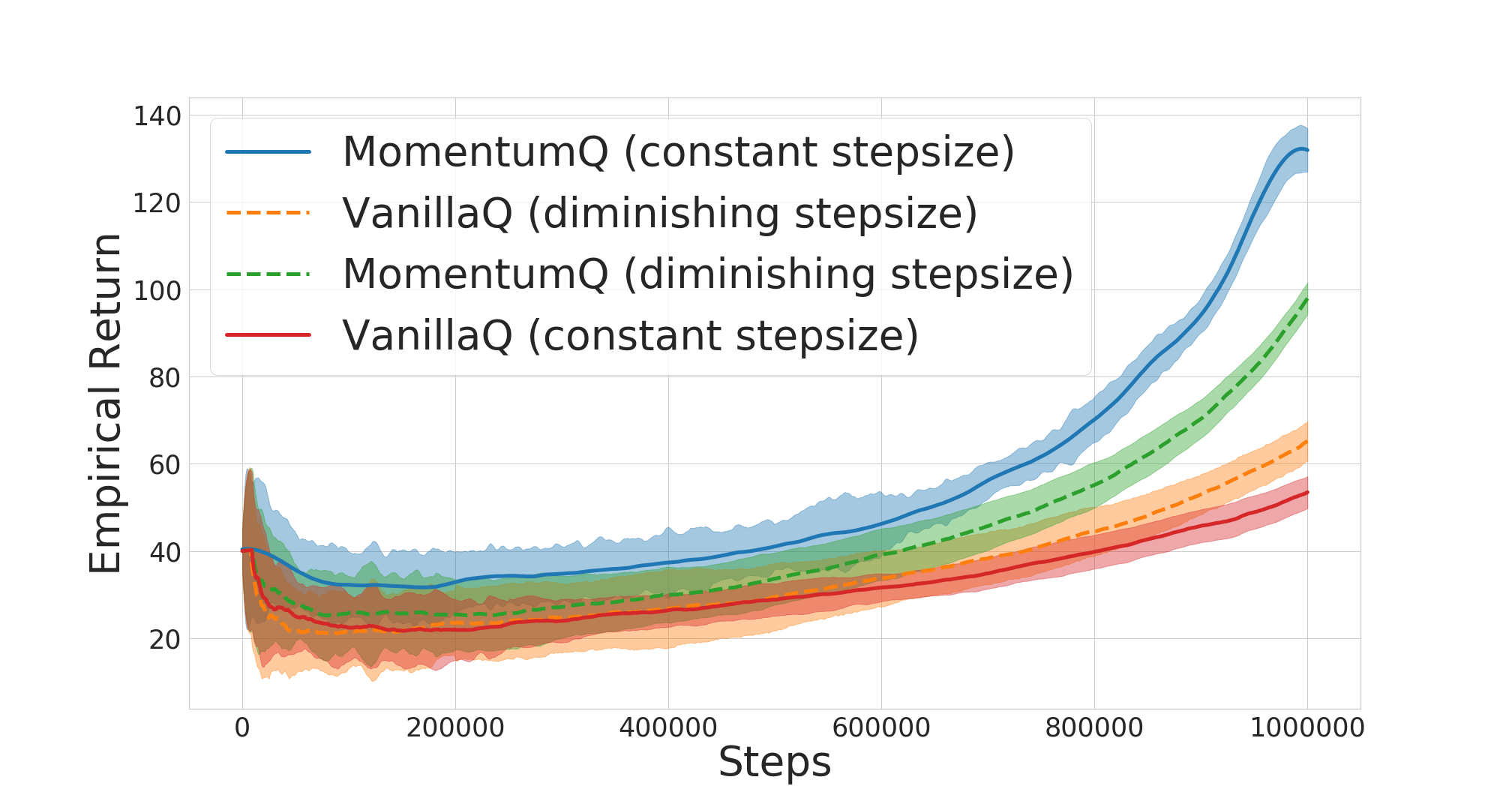}
  \caption{Markovian\ sampling}
  \label{fig: noniid comparison}
\end{subfigure}
\caption{Comparison of MomentumQ with VanillaQ in the FrozenLake-$128 \times 128$ task with various learning rate schemes and sampling strategies.}
\label{fig:aql comp approx}
\vspace{-0.1in}
\end{figure}

\section{Conclusion}

We proposed new momentum-based Q-learning algorithms for both the tabular and linear function approximation cases, which are respectively applicable to finite and continuous state-action spaces. We further characterized the convergence rate for these algorithms, and showed that they converge faster than the SpeedQ and vanilla Q-learning algorithms. We empirically evaluated the algorithms and verified that the proposed algorithms can accelerate the convergence in comparison to vanilla Q-learning on various challenging tasks under both tabular and parametric Q-learning settings.

\section*{Acknowledgements}
The work was supported in part by the U.S. National Science Foundation under Grants CCF-1761506, ECCS-1818904, CCF-1909291 and CCF-1900145, and the startup fund of the Southern University of Science and Technology (SUSTech), China.






\appendix
\vspace{1cm}
\noindent\textbf{\Large Appendices}

\section{Specifications of FrozonLake Problem}\label{app:frozen}

\begin{figure}[tb]
    \centering
    \includegraphics{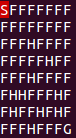}
    \caption{The FrozenLake-$8 \times 8$ task environment.}
    \label{fig:env}
\end{figure}

FrozenLake is a classic benchmark problem for Q-learning, in which an agent controls the movement of a character in an $n \times n$ grid world. Some tiles of the grid are walkable, and others lead to the agent falling into the water. Additionally, the movement direction of the agent is uncertain and only partially depends on the chosen direction. The agent is rewarded for finding a feasible path to a goal tile. As shown in Figure~\ref{fig:env} with a Frozenlake-$8 \times 8$ task, ``S" is the safe starting point, ``F" is the safe frozen surface, ``H" stands for the hole that terminates the game, and ``G" is the target state that comes with an immediate reward of 1. This forms a problem with the state-space size $n^2$, the action-space size $4$ and the reward space $R = \{0, 1\}$. For tabular Q-learning algorithms with finite state-action problems of relatively small dimensions, FrozenLake-$4 \times 4$ and FrozenLake-$8 \times 8$ are two typical benchmark tasks. As the grid world becomes large, e.g., FrozenLake-$128 \times 128$, Q-learning with linear function approximation is then adopted to solve the problem.

\section{Proof of Lemma \ref{lem:bias}}

We bound the expectation of bias via constructing a new Markov chain and applying some techniques from information theory. Before deriving the bound, we first introduce some technical lemmas.

\begin{lemma}\label{lem:boundedGra}
Suppose Assumptions \ref{asp:boundedPhi} and \ref{asp:boundedDomain} hold. Then for $g_k$ defined in \eqref{eq:accLinear}, we have $\norm{g_k}_2 \leq G_{\max}$ for all $k$, where $G_{\max} = 2D_{\max} + R_{\max}$.
\end{lemma}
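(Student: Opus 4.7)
The statement is a pointwise bound on the scalar-weighted feature vector $g_k=\delta_k\,\Phi(x_k,u_k)$, where I write $\delta_k := \Phi(x_k,u_k)^T\theta_k - R(x_k,u_k) - \gamma\max_{u'\in U(x_{k+1})}\Phi(x_{k+1},u')^T\theta_k$. Since $\delta_k$ is a scalar, the plan is simply to factor $\|g_k\|_2 = |\delta_k|\,\|\Phi(x_k,u_k)\|_2$, use Assumption~\ref{asp:boundedPhi} to kill the $\Phi$ norm on the right, and then triangle-inequality the three pieces of $\delta_k$ individually, using boundedness of the features, the reward, and the parameter iterate.

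Concretely, the three estimates I would carry out in sequence are: (i) $|\Phi(x_k,u_k)^T\theta_k|\leq \|\Phi(x_k,u_k)\|_2\|\theta_k\|_2 \leq D_{\max}$ by Cauchy--Schwarz together with Assumption~\ref{asp:boundedPhi} and the observation that Assumption~\ref{asp:boundedDomain} (ball of diameter $D_{\max}$ centered at the origin and containing $\theta^\star$) implies $\|\theta_k\|_2 = \|\theta_k - 0\|_2 \leq D_{\max}$; (ii) $|R(x_k,u_k)|\leq R_{\max}$ by the definition of the reward function given in Section~\ref{subsec:MDP}; and (iii) $\gamma\bigl|\max_{u'}\Phi(x_{k+1},u')^T\theta_k\bigr|\leq \gamma\max_{u'}\|\Phi(x_{k+1},u')\|_2\|\theta_k\|_2 \leq \gamma D_{\max}\leq D_{\max}$, again combining Cauchy--Schwarz with Assumptions~\ref{asp:boundedPhi} and~\ref{asp:boundedDomain} and the fact that $\gamma\in(0,1)$. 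Summing these three bounds gives $|\delta_k|\leq 2D_{\max}+R_{\max}=G_{\max}$, and multiplying by the already-bounded $\|\Phi(x_k,u_k)\|_2\leq 1$ yields the claim.

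There is no real obstacle here: the argument is a short application of Cauchy--Schwarz and the triangle inequality, with the only mildly non-obvious point being that Assumption~\ref{asp:boundedDomain} (phrased in terms of the diameter of a ball centered at the origin) directly yields an $L_2$-norm bound $\|\theta_k\|_2\leq D_{\max}$ for every iterate. Once that is noted, the rest is mechanical, and the bound of the lemma is achieved with no slack beyond absorbing $\gamma D_{\max}\leq D_{\max}$.
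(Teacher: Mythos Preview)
Your proposal is correct and follows essentially the same approach as the paper: factor out the scalar $\delta_k$, bound $\|\Phi(x_k,u_k)\|_2\leq 1$ via Assumption~\ref{asp:boundedPhi}, and then apply the triangle inequality together with Cauchy--Schwarz to the three constituent terms of $\delta_k$. If anything, your write-up is slightly more careful than the paper's, which silently uses $\|\theta\|_2\leq D_{\max}$ and absorbs the factor $\gamma$ without comment.
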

\begin{proof}
Following from the definition of $g_k$ and the assumptions that $\norm{\Phi(x,u)}_2\leq 1, \norm{\theta}_2\leq D_{\max}$, and $\norm{R(x,u)}_2\leq R_{\max}$, we have
\begin{align*}
\norm{g_k}_2 =& \norm{(\Phi(x_k,u_k)^T\theta_k - R(x_k,u_k)- \gamma \underset{u'\in U(x_{k+1})}{\max} \Phi(x_{k+1},u')^T\theta_k)\Phi(x_k,u_k)}_2\\
\leq& \norm{\Phi(x_k,u_k)^T\theta_k}_2 + \norm{R(x_k,u_k)}_2 + \underset{u'\in U(x_{k+1})}{\max}\norm{\Phi(x_{k+1},u')^T\theta_k}_2\\
\leq& 2D_{\max} + R_{\max},
\end{align*}
where we use Cauchy-Schwartz inequality and the triangle inequality.
\end{proof}

For notational simplicity, throughout this section we use $O=(x,u,x')$ to denote the sample tuple and $O_k=(x_k,u_k,x_{k+1})$ to denote the sample tuple at time $k$.

\begin{lemma}\label{lem:biasLip}
Let $\xi(\theta;O) := (g(\theta;O) - \bar g(\theta))^T(\theta - \theta^\star)$. Then $\xi(\theta;O)$ is uniformly bounded by
$$ \lvert \xi(\theta;O) \rvert \leq 2D_{\max}G_{\max}, \quad\forall \theta\in\mathcal{B}, $$
and it is Lipschitz continuous with
$$ \lvert \xi(\theta;O)-\xi(\theta';O) \rvert \leq 2((1+\gamma)D_{\max}+G_{\max})\norm{\theta-\theta'}_2, \quad\forall \theta,\theta'\in\mathcal{B}. $$
\end{lemma}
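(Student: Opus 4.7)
The plan is to reduce both claims to (i) uniform norm bounds on $g(\theta;O)$ and $\bar g(\theta)$, which are already available through Lemma \ref{lem:boundedGra}, and (ii) a $(1+\gamma)$-Lipschitz estimate for $\theta\mapsto g(\theta;O)$ that follows from $\norm{\Phi}_2\leq 1$ together with non-expansiveness of the pointwise maximum. Everything else will be Cauchy-Schwarz, the triangle inequality, and Jensen's inequality.

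For the uniform bound, I would apply Cauchy-Schwarz to write
\begin{equation*}
\lvert \xi(\theta;O)\rvert \;\leq\; \norm{g(\theta;O)-\bar g(\theta)}_2 \cdot \norm{\theta-\theta^\star}_2.
\end{equation*}
The second factor is at most $D_{\max}$ by Assumption \ref{asp:boundedDomain}, since $\theta,\theta^\star\in\mathcal{B}$. For the first factor, the triangle inequality combined with Jensen's inequality gives $\norm{g(\theta;O)-\bar g(\theta)}_2 \leq \norm{g(\theta;O)}_2 + \mathbb{E}_\mu[\norm{g(\theta;O)}_2] \leq 2G_{\max}$ by Lemma \ref{lem:boundedGra}. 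Multiplying yields $\lvert \xi(\theta;O)\rvert \leq 2D_{\max}G_{\max}$.

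For the Lipschitz claim I would split the difference by adding and subtracting $(g(\theta';O)-\bar g(\theta'))^T(\theta-\theta^\star)$:
\begin{equation*}
\xi(\theta;O)-\xi(\theta';O) = \bigl[(g(\theta;O)-\bar g(\theta)) - (g(\theta';O)-\bar g(\theta'))\bigr]^T(\theta-\theta^\star) + (g(\theta';O)-\bar g(\theta'))^T(\theta-\theta').
\end{equation*}
The second piece is at most $2G_{\max}\norm{\theta-\theta'}_2$ by the same argument as above. For the first piece, it suffices to establish that $g(\cdot;O)$ is $(1+\gamma)$-Lipschitz in $\theta$, since $\bar g$ then inherits the same constant via Jensen, so that $(g-\bar g)(\cdot;O)$ is at most $2(1+\gamma)$-Lipschitz; combined with Cauchy-Schwarz and $\norm{\theta-\theta^\star}_2\leq D_{\max}$, this bounds the first piece by $2(1+\gamma)D_{\max}\norm{\theta-\theta'}_2$. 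Summing the two contributions gives the stated constant $2((1+\gamma)D_{\max}+G_{\max})$.

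To establish the Lipschitz property of $g(\cdot;O)$ itself, I would use the identity
\begin{equation*}
g(\theta;O)-g(\theta';O) = \Bigl(\Phi(x,u)^T(\theta-\theta') - \gamma\bigl[\max_{u'}\Phi(x',u')^T\theta - \max_{u'}\Phi(x',u')^T\theta'\bigr]\Bigr)\Phi(x,u),
\end{equation*}
bound the outer $\Phi(x,u)$ in norm by $1$, apply Cauchy-Schwarz to the linear term, and use the elementary non-expansive inequality $\lvert\max_{u'} a_{u'} - \max_{u'} b_{u'}\rvert \leq \max_{u'}\lvert a_{u'}-b_{u'}\rvert$ on the bracketed difference, obtaining $\norm{g(\theta;O)-g(\theta';O)}_2\leq (1+\gamma)\norm{\theta-\theta'}_2$. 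The only mildly subtle step is this last non-expansiveness estimate for the greedy max over actions; once it is in place, the rest is bookkeeping, so I anticipate no substantive obstacles.
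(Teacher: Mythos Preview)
Your proposal is correct and follows essentially the same approach as the paper: Cauchy--Schwarz plus the diameter bound for the uniform estimate, and for the Lipschitz claim the same add-and-subtract decomposition together with the $(1+\gamma)$-Lipschitz property of $g(\cdot;O)$. The only cosmetic difference is that the paper establishes the latter via a two-case argument on the sign of $\max_{u'}\Phi(x',u')^T\theta'-\max_{u'}\Phi(x',u')^T\theta$ (choosing $u^\star$ as the argmax on the larger side), whereas you invoke the non-expansiveness inequality $\lvert\max_{u'}a_{u'}-\max_{u'}b_{u'}\rvert\le\max_{u'}\lvert a_{u'}-b_{u'}\rvert$ directly; your version is slightly cleaner but the content is identical.
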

\begin{proof}
The first statement is straightforward based on Assumption \ref{asp:boundedDomain} and Lemma \ref{lem:boundedGra}. That is,
$$ \lvert \xi(\theta;O) \rvert \leq \norm{g(\theta;O) - \bar g(\theta)}_2\norm{\theta - \theta^\star}_2\leq 2D_{\max}G_{\max}.
$$

Next to prove the Lipschitz condition, we first prove the Lipschitz condition of $g(\theta;O_k)$ with respect to $\theta$.
\begin{align*}
    \norm{g(\theta;O) - g(\theta';O)}_2 \overset{\text{(i)}}{\leq}&\lvert \Phi(x,u)^T(\theta - \theta') + \gamma\underset{u'\in U(x')}{\max}\Phi(x',u')^T\theta' - \gamma\underset{u'\in U(x')}{\max}\Phi(x',u')^T\theta \rvert\\
    \overset{\text{(ii)}}{\leq}& \lvert{\Phi(x,u)^T(\theta - \theta')} \rvert + \lvert \gamma\underset{u'\in U(x')}{\max}\Phi(x',u')^T\theta' - \gamma\underset{u'\in U(x')}{\max}\Phi(x',u')^T\theta \rvert,
\end{align*}
where (i) follows from Cauchy-Schwartz inequality and the assumption $\norm{\Phi}_2\leq1$, and (ii) follows from the triangle inequality. 

Now we consider two cases. If the item in the second norm of (ii) is non-negative, we let $u^\star = \underset{u'\in U(x')}{\arg\max}\Phi(x',u')^T\theta'$. Then $\underset{u'\in U(x')}{\max}\Phi(x',u')^T\theta \geq \Phi(x',u^\star)^T\theta$. Thus, we continue to bound the above inequality as
    \begin{align}
    \norm{g(\theta;O)-g(\theta';O)}_2\leq & \lvert{\Phi(x,u)^T(\theta - \theta')} \rvert + \gamma\Phi(x',u^\star)^T(\theta' - \theta) \nonumber\\ &
    \lvert{\Phi(x,u)^T(\theta - \theta')} \rvert + \gamma\lvert{\Phi(x',u^\star)^T(\theta - \theta')} \rvert \label{eq:lembiaspf1} 
    \end{align}
Similarly, if this item is negative, we let $u^\star = \underset{u'\in U(x')}{\arg\max}\Phi(x',u')^T\theta$. Then $\underset{u'\in U(x')}{\max}\Phi(x',u')^T\theta' \geq \Phi(x',u^\star)^T\theta'$. Thus, we have 
    \begin{align}
    \norm{g(\theta;O)-g(\theta';O)}_2\leq & \lvert{\Phi(x,u)^T(\theta - \theta')} \rvert + \gamma\Phi(x',u^\star)^T(\theta - \theta') \nonumber\\ &
    \lvert{\Phi(x,u)^T(\theta - \theta')} \rvert + \gamma\lvert{\Phi(x',u^\star)^T(\theta - \theta')} \rvert \label{eq:lembiaspf2} 
    \end{align}
Then it follows from \eqref{eq:lembiaspf1} and \eqref{eq:lembiaspf2} that 
\[
\norm{g(\theta;O)-g(\theta';O)}_2\leq (1+\gamma)\norm{\theta-\theta'}.
\]
Similarly, we obtain the same result for $\bar g(\theta)$ as follows.
$$ \norm{\bar g(\theta) - \bar g(\theta')}_2 \leq \underset{\mu}{\mathbb{E}} \norm{g_k(\theta) - g_k(\theta')}_2 \leq (1+\gamma)\norm{\theta - \theta'}_2.
$$

Then we focus on obtaining the second statement,
\begin{align*}
     &\lvert \xi(\theta;O)-\xi(\theta';O) \rvert\\
     &\quad= \lvert (g(\theta;O) - \bar g(\theta))^T(\theta - \theta^\star) - (g(\theta';O) - \bar g(\theta'))^T(\theta' - \theta^\star) \rvert\\
     &\quad\leq \norm{g(\theta;O) - \bar g(\theta)}_2\norm{\theta - \theta'}_2 + \norm{\theta' - \theta^\star}_2\norm{(g(\theta;O) - \bar g(\theta)) - (g(\theta';O) - \bar g(\theta'))}_2\\
     &\quad\overset{\text{(i)}}{\leq} 2G_{\max}\norm{\theta - \theta'}_2 + D_{\max} \norm{(g(\theta;O) - g(\theta';O)) - (\bar g(\theta) - \bar g(\theta'))}_2\\
     &\quad\overset{\text{(ii)}}{\leq} 2G_{\max}\norm{\theta - \theta'}_2 + 2D_{\max}(1+\gamma)\norm{\theta - \theta'}_2\\
     &\quad= 2((1+\gamma)D_{\max}+G_{\max})\norm{\theta-\theta'}_2,
\end{align*}
where (i) follows from Assumption \ref{asp:boundedDomain} and Lemma \ref{lem:boundedGra}, and (ii) follows from triangle inequality and \eqref{eq:lembiaspf1}.
\end{proof}

We use $X\rightarrow Z\rightarrow Y$ to indicate that the random variable $X$ and $Y$ are independent conditioned on $Z$.
\begin{lemma}\label{lem:biasMarkov}
\citep[Lemma 9]{bhandari2018finite} Consider two random variables $X$ and $Y$ such that 
\begin{equation}\label{eq:lemMar}
    X\rightarrow x_k \rightarrow x_{k+\tau} \rightarrow Y,
\end{equation} 
for fixed $ k $ and $\tau>0$. Suppose Assumption \ref{asp:markov} holds. Let $X',Y'$ are independent copies drawn from the marginal distributions of $X$ and $Y$, that is $\mathbb{P}(X'=\cdot,Y'=\cdot) = \mathbb{P}(X=\cdot)  \mathbb{P}(Y=\cdot)$. Then, 
for any bounded $v$, we have
$$ \lvert \mathbb{E}[v(X,Y)] - \mathbb{E}[v(X',Y')] \rvert \leq 2 \norm{v}_{\infty}(\sigma\rho^{\tau}).
$$
\end{lemma}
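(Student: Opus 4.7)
The plan is to reduce the problem of bounding $|\mathbb{E}[v(X,Y)] - \mathbb{E}[v(X',Y')]|$ to controlling the total variation distance between the joint law of the two Markov states $(x_k, x_{k+\tau})$ and the product of their marginals, and then to invoke the geometric mixing in Assumption \ref{asp:markov}.

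First, I would exploit the conditional independence encoded in $X \to x_k \to x_{k+\tau} \to Y$. Writing $P_{X \mid x_k}$ and $P_{Y \mid x_{k+\tau}}$ for the relevant conditional laws and $\pi_{k,k+\tau}$ for the joint law of $(x_k, x_{k+\tau})$, the joint law of $(X,Y)$ factors as
\begin{equation*}
P_{X,Y}(dx,dy) = \int\!\!\int P_{X \mid x_k}(dx)\, P_{Y \mid x_{k+\tau}}(dy)\, \pi_{k,k+\tau}(dx_k, dx_{k+\tau}),
\end{equation*}
while the product of marginals $P_X \otimes P_Y$ admits the identical expression with $\pi_{k,k+\tau}$ replaced by $\pi_k \otimes \pi_{k+\tau}$. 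Introducing the averaged function
\begin{equation*}
w(x_k, x_{k+\tau}) := \int\!\!\int v(x,y)\, P_{X \mid x_k}(dx)\, P_{Y \mid x_{k+\tau}}(dy),
\end{equation*}
which is uniformly bounded by $\norm{v}_\infty$, the difference of interest rewrites as
\begin{equation*}
\mathbb{E}[v(X,Y)] - \mathbb{E}[v(X',Y')] = \int w\, d(\pi_{k,k+\tau} - \pi_k \otimes \pi_{k+\tau}),
\end{equation*}
which is at most $2\norm{v}_\infty\, d_{TV}(\pi_{k,k+\tau},\, \pi_k \otimes \pi_{k+\tau})$ by the variational characterization of the TV norm.

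Second, using the decomposition $\pi_{k,k+\tau}(dx_k, dx_{k+\tau}) = \pi_k(dx_k)\, P^{\tau}(x_k, dx_{k+\tau})$, where $P^{\tau}$ is the $\tau$-step transition kernel, a direct calculation gives
\begin{equation*}
d_{TV}(\pi_{k,k+\tau},\, \pi_k \otimes \pi_{k+\tau}) \leq \sup_{x_k} d_{TV}\bigl(P^{\tau}(x_k, \cdot),\, \pi_{k+\tau}(\cdot)\bigr).
\end{equation*}
When the chain is initiated from the stationary distribution $\mu$, $\pi_{k+\tau} = \mu$ and Assumption \ref{asp:markov} immediately supplies $\sup_{x_k} d_{TV}(P^{\tau}(x_k, \cdot), \mu) \leq \sigma\rho^\tau$, completing the chain of inequalities.

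The main obstacle I foresee is bookkeeping around the factor of two in the definition of the TV norm (sup-over-events versus scaled $L^1$ of densities) and handling the case when $\pi_{k+\tau} \neq \mu$: there one splits $d_{TV}(P^{\tau}(x_k, \cdot), \pi_{k+\tau})$ via the triangle inequality into $d_{TV}(P^{\tau}(x_k, \cdot), \mu) + d_{TV}(\mu, \pi_{k+\tau})$ and argues that the extra $\sigma\rho^{k+\tau}$ contribution is dominated by $\sigma\rho^\tau$, so the stated constant is preserved.
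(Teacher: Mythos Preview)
The paper does not supply its own proof of this lemma; it simply cites \citep[Lemma 9]{bhandari2018finite} and uses the result. Your argument is the standard one and is essentially what appears in that reference: pass to the averaged function $w$ on the state pair $(x_k,x_{k+\tau})$, bound by total variation between the joint and product laws, and control that with the $\tau$-step mixing bound from Assumption~\ref{asp:markov}.

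One small caveat on your last paragraph: in the non-stationary case your triangle-inequality split gives $d_{TV}(P^\tau(x_k,\cdot),\pi_{k+\tau}) \leq \sigma\rho^\tau + \sigma\rho^{k+\tau}$, and while the second term is indeed at most $\sigma\rho^\tau$, the sum is then bounded by $2\sigma\rho^\tau$, not $\sigma\rho^\tau$; so the resulting constant would be $4\|v\|_\infty$ rather than the stated $2\|v\|_\infty$. In the paper's setting this is moot because the chain is taken to run at stationarity (the expectation defining $\bar g$ in \eqref{eq:gbarQ} is over $\mu$), so $\pi_{k+\tau}=\mu$ and your clean bound applies directly.
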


We continue the proof of Lemma \ref{lem:bias}. We first develop the connection between $\xi(\theta_k;O_k)$ and $\xi(\theta_{k-\tau};O_k)$ via Lemma \ref{lem:biasLip}. To do so, we first observe that 
\begin{align*} 
\norm{\theta_{i+1} - \theta_i}_2 =& \norm{\beta_i(\theta_i - \theta_{i-1}) + a_i(1+b_i)g_i + a_ib_ig_{i-1} }_2\\
\overset{\text{(i)}}{\leq}& \norm{\beta_i(\theta_i - \theta_{i-1})}_2 + \norm{a_i(1+b_i)g_i}_2 + \norm{a_ib_ig_{i-1}}_2\\
\overset{\text{(ii)}}{\leq}& D_{\max}\beta_i + 3G_{\max}a_i,
\end{align*}
where (i) follows from the triangle inequality and (ii) from the Assumptions \ref{asp:boundedDomain} and \ref{lem:boundedGra} and the fact $b_i < 1$.
Then we have
$$ \norm{\theta_k - \theta_{k-\tau}}_2 \leq \sum_{i=k-\tau}^{k-1} \norm{\theta_{i+1} - \theta_i}_2\leq D_{\max}\sum_{i=k-\tau}^{k-1} \beta_i + 3G_{\max}\sum_{i=k-\tau}^{k-1} a_i.
$$
Thus, we can relate $\xi(\theta_k;O_k)$ and $\xi(\theta_{k-\tau};O_k)$ by using the Lipschitz property established in Lemma \ref{lem:biasLip} as follows:
\begin{align} 
    \xi(\theta_k;O_k) - \xi(\theta_{k-\tau};O_k) \leq& \lvert \xi(\theta_k;O_k) - \xi(\theta_{k-\tau};O_k) \rvert \nonumber\\
    \leq& 2((1+\gamma)D_{\max}+G_{\max})\norm{\theta_k-\theta_{k-\tau}}_2 \nonumber\\
    \leq& 2((1+\gamma)D_{\max}+G_{\max})\left( D_{\max}\sum_{i=k-\tau}^{k-1} \beta_i + 3G_{\max}\sum_{i=k-\tau}^{k-1} a_i \right). \label{eq:lembiaspfstep1}
\end{align}

Next, we bound $\mathbb{E}[\xi(\theta_{k-\tau};O_k)]$ using Lemma \ref{lem:biasMarkov}. 
Observe that given any deterministic $\theta\in\mathcal{B}$, we have 
$$ \mathbb{E}[\xi(\theta; O_k)] = (\mathbb{E}[g(\theta; O_k)] - \bar g(\theta))^T(\theta - \theta^\star) = 0.
$$
Since $\theta_0$ is a fixed constant, we have $\mathbb{E}[\xi(\theta_0, O_k)] = 0$.
Now we are ready to bound $ \mathbb{E}[\xi(\theta_{k-\tau},O_k)]$ via Lemma \ref{lem:biasMarkov} by constructing a random process satisfying \eqref{eq:lemMar}. To do so, consider random variables $\theta'_{k-\tau}$ and $O'_k$ drawn independently from the marginal distribution of $\theta_{k-\tau}$ and $O_k$, so that $\mathbb{P}(\theta'_{k-\tau}=\cdot,O'_k=\cdot) = \mathbb{P}(\theta_{k-\tau}=\cdot)  \mathbb{P}(O_k=\cdot)$. We further obtain $\mathbb{E}[\xi(\theta'_{k-\tau},O'_k)]=\mathbb{E}[\mathbb{E}[\xi(\theta'_{k-\tau},O'_k)|\theta'_{k-\tau}]] = 0 $ since $\theta'_{k-\tau}$ and $O'_k$ are independent. Combining Lemmas \ref{lem:biasLip} and \ref{lem:biasMarkov}, we have
\begin{equation}\label{eq:lembiaspfstep2}
    \mathbb{E}[\xi(\theta_{k-\tau},O_k)]\leq 2 (2D_{\max}G_{\max}) (\sigma\rho^{\tau}).
\end{equation}

Finally, we are ready to bound the bias. 
We first take expectation for both sides of \eqref{eq:lembiaspfstep1} and obtain
$$ \mathbb{E}[\xi(\theta_k;O_k)] \leq \mathbb{E}[\xi(\theta_{k-\tau};O_k)] + 2((1+\gamma)D_{\max}+G_{\max})\left( D_{\max}\sum_{i=k-\tau}^{k-1} \beta_i + 3G_{\max}\sum_{i=k-\tau}^{k-1} a_i \right).
$$
When $k \leq \tau^{mix}(\kappa)$, we choose $\tau = k$ and have
$$\begin{aligned}
\mathbb{E}[\xi(\theta_k;O_k)] \leq& \mathbb{E}[\xi(\theta_{0};O_k)] + 2((1+\gamma)D_{\max}+G_{\max})\left( D_{\max}\sum_{i=0}^{k-1} \beta_i + 3G_{\max}\sum_{i=0}^{k-1} a_i \right)\\
=& 2((1+\gamma)D_{\max}+G_{\max})\left( D_{\max}\sum_{i=0}^{k-1} \beta_i + 3G_{\max}\sum_{i=0}^{k-1} a_i \right).
\end{aligned}
$$
When $k > \tau^{mix}(\kappa)$, we choose $\tau = \tau^*:=\tau^{mix}(\kappa) $ and have
\begin{align*}
    \mathbb{E}&[\xi(\theta_k;O_k)]\\
    &\quad\leq \mathbb{E}[\xi(\theta_{k-\tau^*};O_k)] + 2((1+\gamma)D_{\max}+G_{\max})\left( D_{\max}\sum_{i=k-\tau^*}^{k-1} \beta_i + 3G_{\max}\sum_{i=k-\tau^*}^{k-1} a_i \right)\\
    &\quad\overset{\text{(i)}}{\leq} 4D_{\max}G_{\max}(\sigma\rho^{\tau^*}) + 2((1+\gamma)D_{\max}+G_{\max})\left( D_{\max}\sum_{i=k-\tau^*}^{k-1} \beta_i + 3G_{\max}\sum_{i=k-\tau^*}^{k-1} a_i \right)\\
    &\quad\overset{\text{(ii)}}{\leq} 4D_{\max}G_{\max}\kappa + 2((1+\gamma)D_{\max}+G_{\max})\left( D_{\max}\sum_{i=k-\tau^*}^{k-1} \beta_i + 3G_{\max}\sum_{i=k-\tau^*}^{k-1} a_i \right)\\
    &\quad\overset{\text{(iii)}}{\leq} 4D_{\max}G_{\max}\kappa + 2((1+\gamma)D_{\max}+G_{\max})\left( D_{\max}\tau^* \beta_{k-\tau^*} + 3G_{\max}\tau^* a_{k-\tau^*} \right),
\end{align*}
where (i) follows from \eqref{eq:lembiaspfstep2}, (ii) follows due to the definition of the mixing time, and (iii) follows because $a_k, \beta_k$ are non-increasing.

\section{Proof of Theorem \ref{thm:acclinearNoniid} }

Recall that MomentumQ with linear function approximation updates as \eqref{eq:accLinear}. Given the unique fixed point $\theta^\star$ and denoting $b_k + c_k = \beta_k$, we have
\begin{align*}
    \norm{\theta_{k+1}-\theta^\star}_2^2 =& \norm{\theta_k - \theta^\star + \beta_k(\theta_k - \theta_{k-1}) -a_k(1+b_k)g_k + a_k b_k g_{k-1}}_2^2\\
    =& \norm{\theta_{k}-\theta^\star}_2^2 + \norm{\beta_k(\theta_k - \theta_{k-1}) -a_k(1+b_k)g_k + a_k b_k g_{k-1}}_2^2\\
    &+2\langle \theta_{k}-\theta^\star, \beta_k(\theta_k - \theta_{k-1}) -a_k(1+b_k)g_k + a_k b_k g_{k-1} \rangle\\
    =& \norm{\theta_{k}-\theta^\star}_2^2 + \norm{\beta_k(\theta_k - \theta_{k-1}) -a_k(1+b_k)g_k + a_k b_k g_{k-1}}_2^2\\
    &+2\langle \theta_{k}-\theta^\star, \beta_k(\theta_k - \theta_{k-1}) + a_k b_k g_{k-1} \rangle - 2a_k(1+b_k)\langle \theta_{k}-\theta^\star, g_k \rangle.
\end{align*}

Next, taking the expectation over all the randomness up to time step $k$ on both sides, we have
\begin{align}
    \mathbb{E}&\norm{\theta_{k+1}-\theta^\star}_2^2\nonumber\\
    =& \mathbb{E}\norm{\theta_{k}-\theta^\star}_2^2 + \mathbb{E}\norm{\beta_k(\theta_k - \theta_{k-1}) -a_k(1+b_k)g_k + a_k b_k g_{k-1}}_2^2\nonumber\\
    &+2\mathbb{E}\langle \theta_{k}-\theta^\star, \beta_k(\theta_k - \theta_{k-1}) + a_k b_k g_{k-1} \rangle - 2a_k(1+b_k)\mathbb{E}\langle \theta_{k}-\theta^\star, g_k \rangle\nonumber\\
    &+2\mathbb{E}\langle \theta_{k}-\theta^\star, \beta_k(\theta_k - \theta_{k-1}) + a_k b_k g_{k-1} \rangle - 2a_k(1+b_k)\mathbb{E}\langle \theta_{k}-\theta^\star, g_k \rangle \nonumber\\
    \overset{\text{(i)}}{\leq}& \mathbb{E}\norm{\theta_{k}-\theta^\star}_2^2 + \mathbb{E}\norm{\beta_k(\theta_k - \theta_{k-1}) -a_k(1+b_k)g_k + a_k b_k g_{k-1}}_2^2\nonumber\\
    &+2\beta_k\mathbb{E}\norm{\theta_{k}-\theta^\star}_2 \norm{\theta_k - \theta_{k-1}}_2 + 2a_k b_k\mathbb{E}\norm{\theta_{k}-\theta^\star}_2 \norm{g_{k-1}}_2  - 2a_k(1+b_k)\mathbb{E}\langle \theta_{k}-\theta^\star, g_k \rangle \nonumber\\
    \overset{\text{(ii)}}{\leq}& \mathbb{E}\norm{\theta_{k}-\theta^\star}_2^2 + 3\beta_k^2\mathbb{E}\norm{\theta_k - \theta_{k-1}}_2^2 + 3a_k^2(1+b_k)^2\mathbb{E}\norm{g_{k}}_2^2 + 3a_k^2 b_k^2 \mathbb{E}\norm{g_{k-1}}_2^2\nonumber\\
    &+2\beta_k\mathbb{E}\norm{\theta_{k}-\theta^\star}_2 \norm{\theta_k - \theta_{k-1}}_2 + 2a_k b_k\mathbb{E}\norm{\theta_{k} -\theta^\star}_2 \norm{g_{k-1}}_2  - 2a_k(1+b_k)\mathbb{E}\langle \theta_{k}-\theta^\star, g_k \rangle \nonumber\\
    \overset{\text{(iii)}}{\leq}& \mathbb{E}\norm{\theta_{k}-\theta^\star}_2^2 + 3\beta_k^2 D_{\max}^2 + 3a_k^2(1+b_k)^2G_{\max}^2 + 3a_k^2 b_k^2 G_{\max}^2\nonumber\\
    &+2\beta_k D_{\max}^2 + 2a_k b_k D_{\max}G_{\max}  - 2a_k(1+b_k)\mathbb{E}\langle \theta_{k}-\theta^\star, g_k \rangle\nonumber\\
    \overset{\text{(iv)}}{\leq}& \mathbb{E}\norm{\theta_{k}-\theta^\star}_2^2 + 5\beta_k D_{\max}^2 + 15a_k^2G_{\max}^2 + 2a_k b_k D_{\max}G_{\max}  - 2a_k(1+b_k)\mathbb{E}\langle \theta_{k}-\theta^\star, g_k \rangle, \label{eq:pfthm22}
\end{align}
where (i) follows from Cauchy-Schwartz inequality, (ii) holds due to the fact $(x+y+z)^2\leq 3x^2+3y^2+3z^2$, (iii) holds because of the boundedness of the parameter domain in Assumption \ref{asp:boundedDomain} and because of Lemma \ref{lem:boundedGra}, and (iv) follows since $b_k\leq\beta_k<1$.

Since the samples are generated in a non-i.i.d.\ manner, we have
\begin{align}
    \mathbb{E}\left[(\theta_k - \theta^\star)^T g_{k}\right] &= \mathbb{E}\left[(\theta_k - \theta^\star)^T \bar g(\theta_{k})\right] + \mathbb{E}\left[(\theta_k - \theta^\star)^T(g_k - \bar g(\theta_{k}))\right] \nonumber\\
    &= \mathbb{E}\left[(\theta_k - \theta^\star)^T \bar g(\theta_{k})\right] + \mathbb{E} [\xi(\theta_k;O_k)].\label{eq:thmnoniidpf1}
\end{align}
Then, we continue to bound \eqref{eq:pfthm22} and obtain
\begin{align}
    &\mathbb{E}\norm{\theta_{k+1}-\theta^\star}_2^2 \nonumber\\
    &\quad\leq  \mathbb{E}\norm{\theta_{k}-\theta^\star}_2^2 + 5\beta_k D_{\max}^2 + 15a_k^2G_{\max}^2 + 2a_k b_k D_{\max}G_{\max}  - 2a_k(1+b_k)\mathbb{E}\langle \theta_{k}-\theta^\star, g_k \rangle\nonumber\\
    &\quad = \mathbb{E}\norm{\theta_{k}-\theta^\star}_2^2 + 5\beta_k D_{\max}^2 + 15a_k^2G_{\max}^2 + 2a_k b_k D_{\max}G_{\max} \nonumber\\
    &\quad\quad - 2a_k(1+b_k)\mathbb{E}\langle \theta_{k}-\theta^\star, \bar g(\theta_k) \rangle - 2a_k(1+b_k)\mathbb{E} [\xi(\theta_k;O_k)]\nonumber\\
    &\quad\leq \mathbb{E}\norm{\theta_{k}-\theta^\star}_2^2 + 5\beta_k D_{\max}^2 + 15a_k^2G_{\max}^2 + 2a_k b_k D_{\max}G_{\max}  - 2a_k(1+b_k)\delta\mathbb{E}\norm{\theta_{k}-\theta^\star}_2^2\nonumber\\
    &\quad\quad - 2a_k(1+b_k)\mathbb{E} [\xi(\theta_k;O_k)]\nonumber\\
    &\quad= (1 - 2a_k\delta(1+b_k))\mathbb{E}\norm{\theta_{k}-\theta^\star}_2^2 + 5\beta_k D_{\max}^2 + 15a_k^2G_{\max}^2 + 2a_k b_k D_{\max}G_{\max}\nonumber\\
    &\quad\quad - 2a_k(1+b_k)\mathbb{E} [\xi(\theta_k;O_k)],\label{eq:pf_diffstart2}
\end{align}
where the last inequality follows from Assumption \ref{asp:qlearning}.

We consider a constant stepsize $\alpha_k=\alpha$. For notational simplicity, we denote $f_k=5\beta_k D_{\max}^2 + 15a_k^2G_{\max}^2 + 2a_k b_k D_{\max}G_{\max}$, and $\zeta_k=- 2a_k(1+b_k)\mathbb{E} [\xi(\theta_k;O_k)]$. Then for $k>\tau^*$ we have
\begin{align*}
    &\mathbb{E}\norm{\theta_{k+1}-\theta^\star}_2^2 \\
    &\quad\leq (1 - 2\alpha\delta(1+b_k))\mathbb{E}\norm{\theta_{k}-\theta^\star}_2^2 + f_k + \zeta_k\\
    &\quad\leq \dots\\
    &\quad\leq \prod_{i=0}^k (1 - 2\alpha\delta(1+b_i))\norm{\theta_{0}-\theta^\star}_2^2 + \sum_{i=0}^k f_i \prod_{j=i+1}^k (1 - 2\alpha\delta(1+b_j))\\
    &\quad\quad + \sum_{i=\tau^*+1}^k \zeta_i \prod_{j=i+1}^k (1 - 2\alpha\delta(1+b_j)) + \sum_{i=0}^{\tau^*} \zeta_i \prod_{j=i+1}^k (1 - 2\alpha\delta(1+b_j))\\
    &\quad\leq \prod_{i=0}^k (1 - 2\alpha\delta(1+b_i))\norm{\theta_{0}-\theta^\star}_2^2 + \sum_{i=0}^k f_i (1 - 2\alpha\delta)^{k-i}\\
    &\quad\quad + \sum_{i=\tau^*+1}^k \zeta_i (1 - 2\alpha\delta)^{k-i} + \sum_{i=0}^{\tau^*} \zeta_i (1 - 2\alpha\delta)^{k-i},
\end{align*}
where the last inequality follows because $b_k>0,\ \forall k$. 
Further, we bound the term $\sum_{i=0}^k (1 - 2\alpha\delta)^{k-i} f_i$ as
\begin{align}
    &\sum_{i=0}^{k} (1-2\delta \alpha)^{k-i} f_i\nonumber\\
    &= 5 D_{\max}^2\sum_{i=0}^{k} (1-2\delta \alpha)^{k-i}\beta_i + 15\alpha^2G_{\max}^2\sum_{i=0}^{k} (1-2\delta \alpha)^{k-i} + 2\alpha D_{\max}G_{\max}\sum_{i=0}^{k} (1-2\delta \alpha)^{k-i}b_i\nonumber\\
    &\leq 15\alpha^2G_{\max}^2\sum_{i=0}^{k} (1-2\delta \alpha)^{k-i} + (5 D_{\max}^2 + 2\alpha D_{\max}G_{\max} )\sum_{i=0}^{k} (1-2\delta \alpha)^{k-i}\beta_i\nonumber\\
    &\leq \frac{15\alpha G_{\max}^2}{2\delta} + (5 D_{\max}^2 + 2\alpha D_{\max}G_{\max} )\beta(1-2\delta \alpha)^k\sum_{i=0}^{k}\left(\frac{\lambda}{1-2\delta \alpha}\right)^i\nonumber\\
    &\overset{\text{(i)}}{\leq} \frac{15\alpha G_{\max}^2}{2\delta} + (5 D_{\max}^2 + 2\alpha D_{\max}G_{\max} )\beta(1-2\delta \alpha)^k \frac{1}{1-2\delta \alpha - \lambda}, \label{eq:pfterm1}
\end{align}
where (i) follows from $\alpha<\frac{1-\lambda}{2\delta}$. 
It remains to bound the last two tail terms.
From Lemma \ref{lem:bias}, we obtain
\begin{equation*}
    \zeta_i=\left\{\begin{aligned}
    &2\alpha(1+b_i)\left(\eta_1\sum_{i=1}^{k-1}\beta_i + \eta_2\sum_{i=1}^{k-1}a_i\right)\leq 4\alpha\left(\eta_1\tau^*\beta + \eta_2\tau^*\alpha\right),\quad i\leq \tau^*;\\
    &4\alpha\left(4D_{\max}G_{\max}\kappa + \eta_1\tau^*\beta_{i-\tau^*} + \eta_2\tau^*\alpha\right),\quad i>\tau^*,
    \end{aligned}
    \right.
\end{equation*}
where $\eta_1 = 2D_{\max}((1+\gamma)D_{\max}+G_{\max}),\eta_2 = 6G_{\max}((1+\gamma)D_{\max}+G_{\max})$. Then we obtain
\begin{align*}
    &\sum_{i=\tau^*+1}^k \zeta_i (1 - 2\alpha\delta)^{k-i} + \sum_{i=0}^{\tau^*} \zeta_i (1 - 2\alpha\delta)^{k-i} \\
    &\quad\leq 4\eta_2\tau^*\alpha^2\sum_{i=0}^{k} (1 - 2\alpha\delta)^{k-i} + 4\alpha\eta_1\tau^*\beta \sum_{i=0}^{\tau^*} (1 - 2\alpha\delta)^{k-i}\\
    &\quad\quad + 16D_{\max}G_{\max}\kappa\alpha \sum_{i=\tau^*+1}^k (1 - 2\alpha\delta)^{k-i} + 4\alpha \eta_1\tau^*  \sum_{i=\tau^*+1}^k \beta_{i-\tau^*}(1 - 2\alpha\delta)^{k-i}\\
    &\quad\leq \frac{2\eta_2\tau^*\alpha}{\delta} + \frac{2\eta_1\tau^*\beta}{\delta} (1 - 2\alpha\delta)^{k-\tau^*} + \frac{8D_{\max}G_{\max}\kappa}{\delta} + 4\alpha\beta \eta_1\tau^*  \sum_{i=\tau^*+1}^k \lambda^{i-\tau^*}(1 - 2\alpha\delta)^{k-i}\\
    &\quad=  \frac{2\eta_2\tau^*\alpha}{\delta} + \frac{2\eta_1\tau^*\beta}{\delta} (1 - 2\alpha\delta)^{k-\tau^*} + \frac{8D_{\max}G_{\max}\kappa}{\delta}\\
    &\quad\quad + 4\alpha\beta \eta_1\tau^* (1 - 2\alpha\delta)^{k-\tau^*} \sum_{i=\tau^*+1}^k \left(\frac{\lambda}{1 - 2\alpha\delta}\right)^{i-\tau^*}\\
    &\quad\leq \frac{2\eta_2\tau^*\alpha}{\delta} + \frac{2\eta_1\tau^*\beta}{\delta} (1 - 2\alpha\delta)^{k-\tau^*} + \frac{8D_{\max}G_{\max}\kappa}{\delta} + \frac{4\alpha\beta \eta_1\tau^*\lambda}{1 - 2\alpha\delta-\lambda} (1 - 2\alpha\delta)^{k-\tau^*},
\end{align*}
where the last inequality follows due to the fact that $\alpha < \frac{1-\lambda}{2\delta}$. Thus, we can conclude that
\begin{align*}
    \mathbb{E}&\norm{\theta_{k+1}-\theta^\star}_2^2 \\
    &\ \leq \prod_{i=0}^k (1 - 2\alpha\delta(1+b_i))\norm{\theta_{0}-\theta^\star}_2^2 + \sum_{i=0}^k f_i (1 - 2\alpha\delta)^{k-i}\\
    &\ \quad + \sum_{i=\tau^*+1}^k \zeta_i (1 - 2\alpha\delta)^{k-i} + \sum_{i=0}^{\tau^*} \zeta_i (1 - 2\alpha\delta)^{k-i}\\
    &\ \leq \prod_{i=0}^k (1 - 2\alpha\delta(1+b_i))\norm{\theta_{0}-\theta^\star}_2^2 + \frac{15\alpha G_{\max}^2}{2\delta} +  \frac{\beta(5 D_{\max}^2 + 2\alpha D_{\max}G_{\max} )(1-2\delta \alpha)^k}{1-2\delta \alpha - \lambda}\\
    &\ \quad + \frac{2\eta_2\tau^*\alpha}{\delta} + \frac{2\eta_1\tau^*\beta}{\delta} (1 - 2\alpha\delta)^{k-\tau^*} + \frac{8D_{\max}G_{\max}\kappa}{\delta} + \frac{4\alpha\beta \eta_1\tau^*\lambda}{1 - 2\alpha\delta-\lambda} (1 - 2\alpha\delta)^{k-\tau^*}\\
    &\ \leq \prod_{i=0}^k (1 - 2\alpha\delta(1+b_i))\norm{\theta_{0}-\theta^\star}_2^2 + \frac{15\alpha G_{\max}^2}{2\delta} + \frac{2\eta_2\tau^*\alpha}{\delta} +   \frac{8D_{\max}G_{\max}\kappa}{\delta}\\
    &\ \quad + \beta\left( \frac{2\eta_1\tau^*}{\delta} + \frac{5 D_{\max}^2 + 2\alpha D_{\max}G_{\max} + 4\alpha \eta_1\tau^*\lambda }{1-2\delta \alpha - \lambda} \right)(1-2\delta \alpha)^{k-\tau^*}.
\end{align*}

\section{ Proof of Theorem \ref{thm:acclinearNoniidDimin} }

Before proving this theorem, we introduce two lemmas of series sum that will help to streamline the presentation.

\begin{lemma}\label{lem:seqSum}
Let $a_k=\frac{\alpha}{\sqrt{k}}$ and $\beta_{k} = \beta \lambda^k$ with $\alpha>0,\beta,\lambda\in(0,1)$ for $k=1,2,\dots$. Then
\begin{equation}
    \sum_{k=1}^T \frac{\beta_{k}}{a_k} \leq \frac{\beta}{\alpha(1-\lambda)^2}.
\end{equation}
\end{lemma}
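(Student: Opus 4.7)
The plan is to substitute the explicit forms of $a_k$ and $\beta_k$ into the ratio and then reduce the sum to a well-known geometric-type series. First I would write
\[
\frac{\beta_k}{a_k} = \frac{\beta \lambda^k}{\alpha/\sqrt{k}} = \frac{\beta}{\alpha}\sqrt{k}\,\lambda^k,
\]
so that the claim reduces to showing $\sum_{k=1}^T \sqrt{k}\,\lambda^k \leq \frac{1}{(1-\lambda)^2}$.

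Next I would use the elementary bound $\sqrt{k} \leq k$ for all integers $k \geq 1$ to pass from $\sqrt{k}\,\lambda^k$ to $k\,\lambda^k$, and then extend the finite sum to the full infinite series (nonnegativity of the terms permits this). The identity $\sum_{k=1}^{\infty} k\lambda^k = \frac{\lambda}{(1-\lambda)^2}$, obtained for instance by differentiating the geometric series $\sum_{k=0}^{\infty} \lambda^k = \frac{1}{1-\lambda}$ and multiplying by $\lambda$, then gives
\[
\sum_{k=1}^T \sqrt{k}\,\lambda^k \;\leq\; \sum_{k=1}^\infty k\lambda^k \;=\; \frac{\lambda}{(1-\lambda)^2} \;\leq\; \frac{1}{(1-\lambda)^2},
\]
where the last inequality uses $\lambda \in (0,1)$.

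Combining this with the factor $\beta/\alpha$ pulled out at the start yields the desired bound $\sum_{k=1}^T \beta_k/a_k \leq \beta/[\alpha(1-\lambda)^2]$. There is no real obstacle here: the lemma is essentially a bookkeeping fact about the sequence $\{\sqrt{k}\lambda^k\}$, and the only minor choice is whether to use $\sqrt{k} \leq k$ (which gives the stated bound cleanly) or to work with the sharper but messier closed form of $\sum \sqrt{k}\lambda^k$. I would take the former route since the theorem only requires the stated upper bound.
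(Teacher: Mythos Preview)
Your proposal is correct and follows essentially the same approach as the paper: both substitute to get $\frac{\beta}{\alpha}\sqrt{k}\lambda^k$, bound $\sqrt{k}\le k$, and then control $\sum k\lambda^k$ by the geometric-series identity. The only cosmetic difference is that the paper keeps the finite sum and uses the partial-sum formula $\sum_{k=1}^T k\lambda^k = \frac{1}{1-\lambda}\bigl(\sum_{k=1}^T\lambda^k - T\lambda^T\bigr)$ before bounding, whereas you pass to the infinite series and invoke the closed form $\frac{\lambda}{(1-\lambda)^2}$ directly; both routes give the same bound.
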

\begin{proof}
The proof is based on taking the standard sum of geometric sequences as follows:
\begin{equation}\nonumber
    \sum_{k=1}^T \frac{\beta_{k}}{a_k} = \sum_{k=1}^T \frac{\beta\lambda^{k}\sqrt{k}}{\alpha} \leq \sum_{k=1}^T \frac{\beta\lambda^{k}k} {\alpha} = \frac{\beta}{\alpha(1-\lambda)}\left(\sum_{k=1}^T \lambda^{k} -T\lambda^T\right) \leq \frac{\beta}{\alpha(1-\lambda)^2}.
\end{equation}
\end{proof}

\begin{lemma}\label{lem:seqSum2}
Let $a_k=\frac{\alpha}{\sqrt{k}}$. Then
\begin{equation}
    \sum_{k=1}^T a_k \leq 2\alpha \sqrt{T}.
\end{equation}
\end{lemma}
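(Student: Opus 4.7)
The plan is to bound the discrete sum $\sum_{k=1}^T 1/\sqrt{k}$ by a suitable integral, and then multiply by $\alpha$ to recover the claimed bound. Since $f(x) = 1/\sqrt{x}$ is positive and strictly decreasing on $(0,\infty)$, for any integer $k \geq 1$ we have $1/\sqrt{k} \leq \int_{k-1}^{k} 1/\sqrt{x}\,dx$, because on the interval $[k-1,k]$ the function $1/\sqrt{x}$ is at least $1/\sqrt{k}$. Summing this pointwise comparison from $k=1$ to $k=T$ telescopes the integral into a single piece on $[0,T]$.

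The second step is to evaluate $\int_0^T x^{-1/2}\,dx = 2\sqrt{T}$, which is an elementary antiderivative computation. Combining with the integral comparison above yields $\sum_{k=1}^T 1/\sqrt{k} \leq 2\sqrt{T}$. Multiplying both sides by the positive constant $\alpha$ gives
\begin{equation*}
\sum_{k=1}^T a_k \;=\; \alpha \sum_{k=1}^T \frac{1}{\sqrt{k}} \;\leq\; 2\alpha\sqrt{T},
\end{equation*}
which is exactly the claimed inequality.

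There is essentially no obstacle here; the only care needed is to make sure the integral is taken starting from $0$ rather than from $1$, so that the $k=1$ term is correctly dominated (the alternative bound $1 + \int_1^T x^{-1/2}dx = 2\sqrt{T}-1$ also suffices and is in fact tighter, but the stated $2\alpha\sqrt{T}$ is all that is needed for Theorem~\ref{thm:acclinearNoniidDimin}). This is a standard $p$-series style estimate and parallels in spirit the geometric-series manipulation used in Lemma~\ref{lem:seqSum}.
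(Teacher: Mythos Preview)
Your proof is correct and is essentially the same as the paper's: both bound $\sum_{k=1}^T 1/\sqrt{k}$ by the integral $\int_0^T x^{-1/2}\,dx = 2\sqrt{T}$ via the standard comparison for a decreasing integrand (the paper writes the same integral as $\int_1^{T+1}(t-1)^{-1/2}\,dt$, which is just a shift of variable).
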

\begin{proof}
We use the comparison principle to bound the series sum as follows:
$$\sum_{k=1}^T a_k = \sum_{k=1}^T \frac{\alpha}{\sqrt{k}} \leq \int_1^{T+1}\frac{\alpha}{\sqrt{t-1}}dt = 2\alpha\sqrt{t-1}\rvert_1^{T+1} = 2\alpha \sqrt{T}.$$
\end{proof}

The proof of Theorem \ref{thm:acclinearNoniidDimin} is partially similar to that of Theorem \ref{thm:acclinearNoniid}. The steps are the same until \eqref{eq:pf_diffstart2}, where we have
\begin{align*}
    & \mathbb{E}\norm{\theta_{k+1}-\theta^\star}_2^2\\
    &\quad\leq \mathbb{E}\norm{\theta_{k}-\theta^\star}_2^2 + 5\beta_k D_{\max}^2 + 15a_k^2G_{\max}^2 + 2a_k b_k D_{\max}G_{\max}  - 2a_k(1+b_k)\delta\mathbb{E}\norm{\theta_{k}-\theta^\star}_2^2\\
    &\quad\quad - 2a_k(1+b_k)\mathbb{E} [\xi(\theta_k;O_k)].
\end{align*}
Then we continue the proof with rearranging the previous inequality:
\begin{align*}
    &2\delta\mathbb{E}\norm{\theta_{k}-\theta^\star}_2^2\\
    &\leq 2(1+b_k)\delta\mathbb{E}\norm{\theta_{k}-\theta^\star}_2^2\\
    &\leq \frac{\mathbb{E}\norm{\theta_{k}-\theta^\star}_2^2 - \mathbb{E}\norm{\theta_{k+1}-\theta^\star}_2^2}{a_k}\!  +\! \frac{5\beta_k}{a_k} D_{\max}^2\! +\!15a_k G_{\max}^2 \!+\! 2b_k D_{\max}G_{\max} \!+\! 4 |\mathbb{E} [\xi(\theta_k;O_k)]| .
\end{align*}
Then we sum over time step $k$ from 1 to $T(T>\tau^*)$ and obtain
\begin{align*}
    2\delta\sum_{k=1}^T&\mathbb{E}\norm{\theta_{k}-\theta^\star}_2^2\\
    \leq& \sum_{k=1}^T\frac{\mathbb{E}\norm{\theta_{k}-\theta^\star}_2^2- \mathbb{E}\norm{\theta_{k+1}-\theta^\star}_2^2}{a_k} + 4\sum_{k=1}^{\tau^*}|\mathbb{E} [\xi(\theta_k;O_k)]| + 4\sum_{k=\tau^*+1}^{T}|\mathbb{E} [\xi(\theta_k;O_k)]|\\
    &+ 5D_{\max}^2\sum_{k=1}^T\frac{\beta_k}{a_k}  + 15G_{\max}^2\sum_{k=1}^Ta_k  + 2D_{\max}G_{\max}\sum_{k=1}^Tb_k\\
    =& \frac{\norm{\theta_{1}-\theta^\star}_2^2}{a_1} + \sum_{k=2}^T\mathbb{E}\norm{\theta_{k}-\theta^\star}_2^2\left(\frac{1}{a_k}-\frac{1}{a_{k-1}}\right)-\frac{\mathbb{E}\norm{\theta_{T+1}-\theta^\star}_2^2}{a_{T+1}} \\
    &+ 5D_{\max}^2\sum_{k=1}^T\frac{\beta_k}{a_k}  + 15G_{\max}^2\sum_{k=1}^Ta_k  + 2D_{\max}G_{\max}\sum_{k=1}^Tb_k\\
    &+ 4\sum_{k=1}^{\tau^*}|\mathbb{E} [\xi(\theta_k;O_k)]| + 4\sum_{k=\tau^*+1}^{T}|\mathbb{E} [\xi(\theta_k;O_k)]|\\
    \overset{\text{(i)}}{\leq}& \frac{\norm{\theta_{1}-\theta^\star}_2^2}{a_1} + D_{\max}^2 \sum_{k=2}^T\left(\frac{1}{a_k}-\frac{1}{a_{k-1}}\right) \\
    &+ 5D_{\max}^2\sum_{k=1}^T\frac{\beta_k}{a_k}  + 15G_{\max}^2\sum_{k=1}^Ta_k  + 2D_{\max}G_{\max}\sum_{k=1}^Tb_k\\
    &+ 4\sum_{k=1}^{\tau^*}|\mathbb{E} [\xi(\theta_k;O_k)]| + 4\sum_{k=\tau^*+1}^{T}|\mathbb{E} [\xi(\theta_k;O_k)]|\\
    \overset{\text{(ii)}}{\leq}& \frac{D_{\max}^2}{\alpha_T} + 5D_{\max}^2\sum_{k=1}^T\frac{\beta_k}{a_k}  + 15G_{\max}^2\sum_{k=1}^Ta_k  + 2D_{\max}G_{\max}\sum_{k=1}^T\beta_k\\
    &+ 4\sum_{k=1}^{\tau^*}|\mathbb{E} [\xi(\theta_k;O_k)]| + 4\sum_{k=\tau^*+1}^{T}|\mathbb{E} [\xi(\theta_k;O_k)]|\\
    \overset{\text{(iii)}}{\leq}& \frac{D_{\max}^2\sqrt{T}}{\alpha} + \frac{5\beta D_{\max}^2}{\alpha(1-\lambda)^2} + 30\alpha G_{\max}^2 \sqrt{T} + \frac{2D_{\max}G_{\max}\beta \lambda}{1-\lambda} \\
    &+ 4\sum_{k=1}^{\tau^*}|\mathbb{E} [\xi(\theta_k;O_k)]| + 4\sum_{k=\tau^*+1}^{T}|\mathbb{E} [\xi(\theta_k;O_k)]|,
\end{align*}
where (i) follows from Assumption \ref{asp:boundedDomain} and the fact that $\alpha_{k} < \alpha_{k-1}$, and $\mathbb{E}\norm{\theta_{T+1}-\theta^\star}^2/a_{T+1}>0$, (ii) holds due to Assumption \ref{asp:boundedDomain}, and (iii) follows from Lemmas \ref{lem:boundedGra}, \ref{lem:seqSum}, and \ref{lem:seqSum2}.

It remains to bound $4\sum_{k=1}^{\tau^*}|\mathbb{E} [\xi(\theta_k;O_k)]| + 4\sum_{k=\tau^*+1}^{T}|\mathbb{E} [\xi(\theta_k;O_k)]|$. We bound the tail term by using Lemma \ref{lem:bias}.

For simplicity, in the following we denote 
$$\eta_1 = 2D_{\max}((1+\gamma)D_{\max}+G_{\max}),\quad \eta_2 = 6G_{\max}((1+\gamma)D_{\max}+G_{\max}).$$
Following from Lemma \ref{lem:bias}, we have
\begin{align*}
\sum_{k=1}^{\tau^*}|\mathbb{E} [\xi(\theta_k;O_k)]| \leq& \sum_{k=1}^{\tau^*}\eta_1\sum_{i=1}^{k-1}\beta_i + \sum_{k=1}^{\tau^*}\eta_2\sum_{i=1}^{k-1}a_i \\
\leq& \tau^*\eta_1\sum_{k=1}^{T}\beta_k + \tau^*\eta_2\sum_{k=1}^{T}a_k\\
\leq& \frac{\tau^*\eta_1\beta\lambda}{1-\lambda} + 2\tau^*\eta_2\alpha\sqrt{T}.
\end{align*}
Similarly, we obtain
\begin{align*}
\sum_{k=\tau^*+1}^{T}|\mathbb{E} [\xi(\theta_k;O_k)]| \leq& \sum_{k=\tau^*+1}^{T}\left( 4 D_{\max}G_{\max}\kappa + \eta_1\tau^*\beta_{k-\tau^*} + \eta_2\tau^*a_{k-\tau^*}\right) \\
\leq& 4 D_{\max}G_{\max}\kappa T + \tau^*\eta_1\sum_{k=1}^{T-\tau^*}\beta_k + \tau^*\eta_2\sum_{k=1}^{T-\tau^*}a_k\\
\leq& 4 D_{\max}G_{\max}\kappa T + \frac{\tau^*\eta_1\beta\lambda}{1-\lambda} + 2\tau^*\eta_2\alpha\sqrt{T}.
\end{align*}
Thus, we have
\begin{align*}
    2\delta\sum_{k=1}^T&\mathbb{E}\norm{\theta_{k}-\theta^\star}_2^2\\
    \leq& \frac{D_{\max}^2\sqrt{T}}{\alpha} + \frac{5\beta D_{\max}^2}{\alpha(1-\lambda)^2} + 30\alpha G_{\max}^2 \sqrt{T} + \frac{2D_{\max}G_{\max}\beta \lambda}{1-\lambda}\\
    &+ 4\sum_{k=1}^{\tau^*}|\mathbb{E} [\xi(\theta_k;O_k)]| + 4\sum_{k=\tau^*+1}^{T}|\mathbb{E} [\xi(\theta_k;O_k)]|\\
     \leq& \frac{D_{\max}^2\sqrt{T}}{\alpha} + \frac{5\beta D_{\max}^2}{\alpha(1-\lambda)^2} + 30\alpha G_{\max}^2 \sqrt{T} + \frac{2D_{\max}G_{\max}\beta \lambda}{1-\lambda}\\
     &+ 16 D_{\max}G_{\max}\kappa T + \frac{8\tau^*\eta_1\beta\lambda}{1-\lambda} + 16\tau^*\eta_2\alpha\sqrt{T}.
\end{align*}

Finally, we apply Jensen's inequality and complete the proof as
\begin{align*}
    \mathbb{E}\norm{\theta_{\text{out}}-\theta^\star}_2^2 =& \mathbb{E}\norm{\frac{1}{T}\sum_{k=1}^T\theta_{k}-\theta^\star}_2^2
    \leq \frac{1}{T}\sum_{k=1}^T\mathbb{E}\norm{\theta_{k}-\theta^\star}_2^2\\
    \leq& \frac{ D_{\max}^2/\alpha + 30\alpha G_{\max}^2 + 16\tau^*\alpha\eta_2}{2\delta\sqrt{T}} + \frac{8 D_{\max}G_{\max}\kappa}{\delta}\\
    &+ \frac{1}{T}\left[\frac{5\beta D_{\max}^2}{2\alpha\delta(1-\lambda)^2} + \frac{D_{\max}G_{\max}\beta \lambda + 4\tau^*\eta_1\beta\lambda}{\delta(1-\lambda)} \right].
\end{align*}

\section{Proof of Proposition~\ref{lem:boundDk}}

\begin{proof}
For convenience, we denote $\mathcal{M}Q_k(y_k):=\max_{u\in U(y_k)}Q_k(y_k,u)$, then $\hat{\mathcal{T}}_kQ_k = R + \mathcal{M}Q_k(y_k)$ and $\hat{\mathcal{T}}_kQ_{k-1} = R + \mathcal{M}Q_{k-1}(y_k)$.
If $k=0$, we have from \eqref{eq:Dk} that
\begin{align*}
\left\Vert \mathcal{D}_{0}\left[Q_{0},Q_{-1}\right]\right\Vert = & \left\Vert \hat{\mathcal{T}}_{0}Q_{0}\right\Vert \leq\left\Vert R\right\Vert +\gamma\left\Vert \mathcal{M}Q_{0}(y_{0})\right\Vert \\
\leq & R_{\max}+\gamma V_{\max}.
\end{align*}

Now, considering $k\geq1$ we have
\begin{align}
\label{eq:DkWriteOut}
&\left\Vert \mathcal{D}_{k}\left[Q_{k},Q_{k-1}\right]\right\Vert \nonumber\\
&\quad \overset{\text{(i)}}{\leq} \left\Vert R\right\Vert +\gamma\Vert(1+b_k)\mathcal{M}Q_{k}-b_k\mathcal{M}Q_{k-1}\Vert \nonumber\\
&\quad \leq R_{\max}+\gamma\Vert(1+b_k)\mathcal{M}\bigl(Q_{k-1} -\alpha_{k-1}Q_{k-2}+\alpha_{k-1}\mathcal{D}_{k-1}\left[Q_{k-1},Q_{k-2}\right]\bigr) -b_k\mathcal{M}Q_{k-1}\Vert \nonumber \\
&\quad\overset{\text{(ii)}}{\leq}  R_{\max}+\gamma\norm{Q_{k-1}} + \gamma |1+b_k| a_{k-1}\norm{Q_{k-2}}+\gamma |1+b_k|\alpha_{k-1}\norm{\mathcal{D}_{k-1}\left[Q_{k-1},Q_{k-2}\right]},
\end{align}
where (i) follows from the triangle inequality and (ii) follows from the definition of the infinity norm. 

To proceed to bound \eqref{eq:DkWriteOut}, we consider two cases. If $k<\frac{m}{2}$, there are at most a finite number of $\mathcal{D}_{k}$'s, which are obviously bounded. If $k\geq \frac{m}{2}$, we have $|1+b_k|a_{k-1} =\frac{|k-m|}{k}\leq 1$. It follows from \eqref{eq:DkWriteOut} that
\begin{align}
\label{eq:DkBound}
&\left\Vert \mathcal{D}_{k}\left[Q_{k},Q_{k-1}\right]\right\Vert \nonumber\\
&\quad \leq  R_{\max}+\gamma\norm{Q_{k-1}} + \gamma \norm{Q_{k-2}}+ \gamma\norm{\mathcal{D}_{k-1}\left[Q_{k-1},Q_{k-2}\right]} \nonumber \\
&\quad \overset{\text{(i)}}{\leq} R_{\max}+2\gamma V_{\max}+\gamma\left\Vert \mathcal{D}_{k-1}\left[Q_{k-1},Q_{k-2}\right]\right\Vert \nonumber\\
&\quad \overset{\text{(ii)}}{\leq} R_{\max}\sum_{i=0}^{k-\lfloor m/2 \rfloor}\gamma^{i}+2V_{\max}\sum_{i=1}^{k-\lfloor m/2 \rfloor}\gamma^{i} + \gamma^{k-\lfloor m/2 \rfloor}\left\Vert \mathcal{D}_{\lfloor m/2 \rfloor}\left[Q_{\lfloor m/2 \rfloor},Q_{\lfloor m/2 \rfloor-1}\right]\right\Vert
\end{align}
where $\lfloor x \rfloor$ denotes the largest integer that is no larger than $x$. Note that (i) follows from the boundedness of $Q_k$ (Assumption~\ref{asp:boundQ}), and (ii) follows from applying (i) to $\mathcal{D}_t$ for $t=k-1, k-2, \dots, {\lfloor m/2 \rfloor}+1$ iteratively. Since $\gamma <1$, the first two items in (ii) are bounded. Obviously, the third item is also bounded. Therefore, there exists some constant $\bar{D}$, such that $\left\Vert \mathcal{D}_{k}\right\Vert\leq \bar{D},\ \forall k\geq 0$.

The bound on $\epsilon_{k}$ follows directly from its definition as
$$\begin{aligned}
\left\Vert \epsilon_{k}\right\Vert &=\left\Vert \mathbb{E}_{P}\left(\mathcal{D}_{k}\left[Q_{k},Q_{k-1}\right](x,u)|\mathcal{F}_{k-1}\right)-\mathcal{D}_{k}\left[Q_{k},Q_{k-1}\right]\right\Vert\\
&\leq2\left\Vert \mathcal{D}_{k}\left[Q_{k},Q_{k-1}\right]\right\Vert\leq 2\bar{D}.
\end{aligned}$$
Thus we conclude our proof.
\end{proof}

\section{Proof of Theorem~\ref{thm:main} }

We first prove two lemmas that will be useful for establishing the main results.
The first lemma derives the dynamics of $Q_k$ in terms of $E_k$..
\begin{lemma}\label{lem:dynQ}
Consider MomentumQ  as in Algorithm~\ref{alg:AQL}. For any $k\geq1$, we have 
\begin{align}
Q_{k}=\frac{1}{k}(Q_{k-1}-Q_{0}+(k-m-1)\mathcal{T}Q_{k-1})+\frac{1}{k}((m+1)\mathcal{T}Q_{0}-E_{k-1}). \label{eq:2termIter}
\end{align}
\end{lemma}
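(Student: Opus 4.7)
The plan is to prove the identity by induction on $k$, after first using the specific choices $a_k=1/(k+1)$, $b_k=k-m-1$, $c_k=(-k^2+(m+1)k+1)/(k+1)$ to collapse the three-term recursion in~\eqref{eq:compactAQL} into a cleaner two-step update, and then replacing the empirical Bellman operator by its exact counterpart plus the noise $\epsilon_k$.

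First I would check a key algebraic identity: with the prescribed learning rates, the coefficient of $(Q_k-Q_{k-1})$ in~\eqref{eq:compactAQL} equals $a_k$ itself. Indeed, a direct computation gives
\begin{equation*}
b_k(1-a_k)+c_k=\frac{(k-m-1)k}{k+1}+\frac{-k^2+(m+1)k+1}{k+1}=\frac{1}{k+1}=a_k.
\end{equation*}
Combined with $a_k(1+b_k)=(k-m)/(k+1)$ and $a_k b_k=(k-m-1)/(k+1)$, the update~\eqref{eq:compactAQL} (after multiplying by $k+1$) becomes
\begin{equation*}
(k+1)Q_{k+1}=(k+1)Q_k-Q_{k-1}+(k-m)\hat{\mathcal T}_k Q_k-(k-m-1)\hat{\mathcal T}_k Q_{k-1}.
\end{equation*}
Using $\mathcal D_k[Q_k,Q_{k-1}]=(1+b_k)\hat{\mathcal T}_kQ_k-b_k\hat{\mathcal T}_kQ_{k-1}$ and the definition $\epsilon_k=\mathcal D[Q_k,Q_{k-1}]-\mathcal D_k[Q_k,Q_{k-1}]$, the term in $\hat{\mathcal T}_k$ equals $(k-m)\mathcal TQ_k-(k-m-1)\mathcal TQ_{k-1}-\epsilon_k$, so the recursion reduces to
\begin{equation*}
(k+1)Q_{k+1}=(k+1)Q_k-Q_{k-1}+(k-m)\mathcal TQ_k-(k-m-1)\mathcal TQ_{k-1}-\epsilon_k.
\end{equation*}

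Next I would run an induction on $k$. For the base case $k=1$, at $k=0$ we have $Q_{-1}=Q_0$, $a_0=1$, $b_0=-m-1$, $c_0=1$, and one checks directly that $Q_1=\hat{\mathcal T}_0Q_0$, which matches~\eqref{eq:2termIter} because $\mathcal TQ_0-\epsilon_0=\mathcal TQ_0-(\mathcal TQ_0-\hat{\mathcal T}_0Q_0)=\hat{\mathcal T}_0Q_0$ (using that $\mathcal D[Q_0,Q_{-1}]=\mathcal TQ_0$ and $\mathcal D_0[Q_0,Q_{-1}]=\hat{\mathcal T}_0Q_0$). For the inductive step, assume~\eqref{eq:2termIter} holds at $k$, solve it for $Q_{k-1}$, substitute into the reduced recursion above, and simplify: the $(k-m-1)\mathcal TQ_{k-1}$ terms cancel, the $(k+1)Q_k-kQ_k$ collapses to $Q_k$, and $E_{k-1}+\epsilon_k=E_k$ by definition~\eqref{eq:Ek}. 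This yields exactly the form of~\eqref{eq:2termIter} with $k$ replaced by $k+1$.

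The main obstacle is purely bookkeeping: one must verify the coefficient identity $b_k(1-a_k)+c_k=a_k$ correctly, and then track the cancellations in the inductive step without sign errors. There is no analytic difficulty here; the whole argument is algebraic once the specific learning-rate schedule is plugged in. The key conceptual point, which motivates the choice of $c_k$ in Algorithm~\ref{alg:AQL}, is precisely that $c_k$ is designed so that the higher-order momentum coefficient collapses and the resulting linear recursion admits the clean closed form~\eqref{eq:2termIter}, separating the deterministic dynamics driven by $\mathcal T$ from the martingale error $E_{k-1}$.
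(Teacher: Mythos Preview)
Your proposal is correct and follows essentially the same inductive route as the paper: verify the coefficient identity $b_k(1-a_k)+c_k=a_k$, establish the base case $Q_1=\hat{\mathcal T}_0Q_0=\mathcal TQ_0-E_0$, and carry out the inductive step by substitution. The only cosmetic difference is that you solve the induction hypothesis for $Q_{k-1}$ and plug it into the recursion, whereas the paper substitutes the expression for $kQ_k$ directly; the resulting cancellations are identical.
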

\begin{proof}
We prove the lemma by substituting the learning rates $a_k,b_k,c_k$ in Algorithm~\ref{alg:AQL} and using induction. From \eqref{eq:compactAQL}, we see that $Q_{1}=\hat{\mathcal{T}}_{1}Q_{0}=\mathcal{T}Q_{0}-E_{0}$, Thus (\ref{eq:2termIter}) holds when $k=1$. Now assume (\ref{eq:2termIter})
holds for a certain integer $k>1$ we prove it also holds for $k+1$. To see this, we rewrite \eqref{eq:compactAQL} as
\begin{align*}
Q_{k+1}
= & \frac{1}{k+1}Q_{k}-\frac{1}{k+1}Q_{k-1}+\frac{k}{k+1}Q_{k} +\frac{1}{k+1}\left[(k-m)\hat{\mathcal{T}}_{k}Q_{k}-(k-m-1)\hat{\mathcal{T}}_{k}Q_{k-1}\right]\\
= & \frac{1}{k+1}Q_{k}-\frac{1}{k+1}Q_{k-1}+\frac{1}{k+1}(Q_{k-1}-Q_{0} +(k-m-1)\mathcal{T}Q_{k-1}\\
 &+(m+1)\mathcal{T}Q_{0}-E_{k-1}) +\frac{1}{k+1}\left[(k-m)\hat{\mathcal{T}}_{k}Q_{k}-(k-m-1)\hat{\mathcal{T}}_{k}Q_{k-1}\right]\\
= & \frac{1}{k+1}Q_{k}-\frac{1}{k+1}Q_{k-1}+\frac{1}{k+1}(Q_{k-1}-Q_{0} +(k-m-1)\mathcal{T}Q_{k-1}\\
 & +(m+1)\mathcal{T}Q_{0}-E_{k-1})+\frac{1}{k+1}\left[(k-m)\mathcal{T}Q_{k}-(k-m-1)\mathcal{T}Q_{k-1}-\epsilon_{k}\right]\\
= & \frac{1}{k+1}(Q_{k}-Q_{0}+(k-m)\mathcal{T}Q_{k}+(m+1)\mathcal{T}Q_{0}-E_{k}),
\end{align*}
which shows that (\ref{eq:2termIter}) holds for $k+1$. Therefore, by induction (\ref{eq:2termIter}) holds for all $k\geq1$.
\end{proof}

The second lemma derives the propagation of the approximation errors $\epsilon_k$ in the process of Q-function iteration, which can be proved conveniently using Lemma~\ref{lem:dynQ}.
\begin{lemma}\label{lem:errorProp}
Suppose Assumption~\ref{asp:boundQ} holds and $m\geq \frac{1}{\gamma}$ as in Algorithm~\ref{alg:AQL}. Then for all $k\geq m+1$, we have
\begin{equation}\label{eq:pf5}
    \left\Vert Q^{\star}\!-\!Q_{k}\right\Vert\leq \frac{\tilde{h}V_{\max}}{k(1-\gamma)}\!+\!\frac{1}{k}\sum_{i=0}^{k-\lfloor m \rfloor-2}\gamma^{i}\Vert E_{k-i}\Vert,
\end{equation}
where $\tilde{h}=2\gamma(m+\lfloor m \rfloor+2)+2$.
\end{lemma}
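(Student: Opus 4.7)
The plan is to derive a scalar contractive recursion for $e_k:=\|Q^{\star}-Q_k\|$ from the closed-form identity in Lemma~\ref{lem:dynQ}, iterate it down to the index $\lfloor m\rfloor+1$, and then assemble the resulting deterministic constant together with the stochastic tail of the martingale errors.

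First I rewrite $kQ^{\star}=(k-m-1)\mathcal{T}Q^{\star}+(m+1)\mathcal{T}Q^{\star}$ using $\mathcal{T}Q^{\star}=Q^{\star}$ and subtract from it the identity for $kQ_k$ supplied by Lemma~\ref{lem:dynQ}, yielding
\[
k(Q^{\star}-Q_k)=(k-m-1)(\mathcal{T}Q^{\star}-\mathcal{T}Q_{k-1})+(m+1)(\mathcal{T}Q^{\star}-\mathcal{T}Q_0)+(Q_0-Q_{k-1})+E_{k-1}.
\]
Taking the supremum norm, applying the $\gamma$-contraction~\eqref{eq:Contraction} to the two $\mathcal{T}$-differences, and bounding $\|Q^{\star}-Q_0\|$ and $\|Q_0-Q_{k-1}\|$ each by $2V_{\max}$ via Assumption~\ref{asp:boundQ} gives, for all $k\geq m+1$,
\[
ke_k\leq(k-m-1)\gamma\,e_{k-1}+hV_{\max}+\|E_{k-1}\|,\qquad h:=2(m+1)\gamma+2.
\]

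Next I iterate this one-step bound. Applying the same inequality to $e_{k-1},e_{k-2},\ldots$ and using $\frac{k-m-i}{k-i}\leq 1$ (which holds since $m\geq 0$) to collapse the telescoping coefficients, I obtain by induction, after $j$ unfoldings,
\[
ke_k\leq(k-m-j)\gamma^{j}e_{k-j}+hV_{\max}\sum_{\ell=0}^{j-1}\gamma^{\ell}+\sum_{\ell=0}^{j-1}\gamma^{\ell}\|E_{k-1-\ell}\|.
\]
I then choose $j=k-\lfloor m\rfloor-1$, so that $k-j=\lfloor m\rfloor+1$ and the residual coefficient satisfies $k-m-j=\lfloor m\rfloor+1-m\in(0,1]$. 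Bounding $e_{\lfloor m\rfloor+1}\leq 2V_{\max}$ by Assumption~\ref{asp:boundQ} and the geometric sum by $1/(1-\gamma)$, the deterministic part collapses to a single $V_{\max}/(k(1-\gamma))$-term whose coefficient is at most $h+2(1-\gamma)=2m\gamma+4$.

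Finally I show that this coefficient is bounded by $\tilde{h}=2\gamma(m+\lfloor m\rfloor+2)+2$: the hypothesis $m\geq 1/\gamma$ gives $\gamma(\lfloor m\rfloor+2)\geq\gamma m\geq 1$, hence $2m\gamma+4\leq 2m\gamma+2\gamma(\lfloor m\rfloor+2)+2=\tilde{h}$. Dividing by $k$ and re-indexing the stochastic tail (substituting $i=\ell+1$ and noting that the lemma's sum $\sum_{i=0}^{k-\lfloor m\rfloor-2}\gamma^{i}\|E_{k-i}\|$ dominates my $\sum_{\ell=0}^{k-\lfloor m\rfloor-2}\gamma^{\ell}\|E_{k-1-\ell}\|$ up to a geometric-series correction that is itself absorbed into $\tilde{h}$) yields~\eqref{eq:pf5}.

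The main obstacle is the bookkeeping in the iteration step: carrying the products $\prod_{i=1}^{\ell}\frac{k-m-i}{k-i}$ cleanly through the unfolding, choosing $j$ so that the boundary term $e_{\lfloor m\rfloor+1}$ contributes only at constant order, and lining up the off-by-one in the $E$-indexing with the target geometric sum. The assumption $m\geq 1/\gamma$ enters precisely at the last consolidation, converting the crude boundary constant $2V_{\max}$ into the stated coefficient $\tilde{h}V_{\max}/(1-\gamma)$ without losing the overall $1/k$ rate.
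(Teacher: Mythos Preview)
Your overall strategy---extract a one-step contractive inequality for $e_k=\|Q^\star-Q_k\|$ from Lemma~\ref{lem:dynQ}, unfold it down to index $\lfloor m\rfloor+1$, and collect the deterministic and stochastic contributions---is exactly the paper's approach. The paper treats $\|Q_0-Q_{k-1}\|$ slightly differently: instead of bounding it by $2V_{\max}$ outright, it splits $\|Q_0-Q_{k-1}\|\le e_{k-1}+\|Q^\star-Q_0\|$ and then invokes $m\ge 1/\gamma$ \emph{immediately} to reduce the resulting contraction factor $(\gamma(k-m-1)+1)/k$ to $\gamma(k-1)/k$. That factor telescopes exactly, $\prod_{i=0}^{j-1}\frac{\gamma(k-1-i)}{k-i}=\gamma^j(k-j)/k$, so no auxiliary bound $\frac{k-m-i}{k-i}\le 1$ is needed and $\tilde h$ drops out directly without your final constant comparison. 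Both routes are correct and equivalent in spirit; the paper's is just tidier.

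The genuine gap is your last sentence. Your derived tail is $\sum_{\ell=0}^{K}\gamma^\ell\|E_{k-1-\ell}\|$ while the stated lemma has $\sum_{i=0}^{K}\gamma^i\|E_{k-i}\|$ (with $K=k-\lfloor m\rfloor-2$), and these involve \emph{different} random quantities: yours runs over $E_{k-1},\ldots,E_{\lfloor m\rfloor+1}$, the lemma's over $E_k,\ldots,E_{\lfloor m\rfloor+2}$. The $\|E_j\|$ are martingale norms with no a~priori monotonicity, so neither sum dominates the other, and the discrepancy cannot be ``absorbed into $\tilde h$'' (which multiplies the deterministic $V_{\max}$, not the stochastic terms). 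Indeed, reindexing via $i=\ell+1$ shows your sum equals $\gamma^{-1}$ times a shifted version of the target---so yours is the \emph{larger} one, the opposite of what you claim. This off-by-one actually originates upstream: Lemma~\ref{lem:dynQ} carries $E_{k-1}$, yet the paper's own proof of Lemma~\ref{lem:errorProp} silently writes $E_k$ in its first displayed line, so the paper's indexing is internally inconsistent and your bound is arguably the honest one. The clean fix is either to restate the conclusion with $E_{k-1-i}$ in place of $E_{k-i}$, or to observe that the downstream application (Theorem~\ref{thm:main}) only uses $\max_i\|E_{T-i}\|$ and is insensitive to the shift---but do not assert a domination that does not hold.
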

\begin{proof}
For $k\geq m+1$, expand $Q_k$ using (\ref{eq:2termIter}) in Lemma~\ref{lem:dynQ} iteratively, yielding
\begin{align*}
\left\Vert Q^{\star}-Q_{k}\right\Vert 
= & \frac{1}{k}\Vert Q_{0}-Q_{k-1}+(k-m-1)(\mathcal{T}Q^{\star}-\mathcal{T}Q_{k-1}) +(m+1)(\mathcal{T}Q^{\star}-\mathcal{T}Q_{0})+E_{k}\Vert\\
\overset{\text{(i)}}{\leq} & \frac{\gamma(k-m-1)+1}{k}\Vert Q^{\star}-Q_{k-1}\Vert +\frac{\gamma(m+1)+1}{k}\Vert Q^{\star}-Q_{0}\Vert+\frac{\Vert E_{k}\Vert}{k}\\
\overset{\text{(ii)}}{\leq} & \frac{\gamma(k-1)}{k}\Vert Q^{\star}-Q_{k-1}\Vert+\frac{2h}{k}V_{\max}+\frac{\Vert E_{k}\Vert}{k}\\
\overset{\text{(iii)}}{\leq} & \frac{\gamma^{k-\lfloor m \rfloor-1}}{k}(\lfloor m \rfloor+1)\Vert Q^{\star}-Q_{\lfloor m \rfloor+1}\Vert+\frac{2hV_{\max}}{k}\sum_{i=0}^{k-\lfloor m \rfloor-2}\gamma^{i} +\sum_{i=0}^{k-\lfloor m \rfloor-2}\frac{\gamma^{i}}{k}\Vert E_{k-i}\Vert\\
\leq & 2\frac{\gamma (\lfloor m \rfloor+1)+h}{k(1-\gamma)}V_{\max}+\frac{1}{k}\sum_{i=0}^{k-\lfloor m \rfloor-2}\gamma^{i}\Vert E_{k-i}\Vert,
\end{align*}
where (i) follows from the triangle inequality and the contraction property~\eqref{eq:Contraction}, (ii) follows from Assumption~\ref{asp:boundQ} and because $m\geq\frac{1}{\gamma}$, $h=\gamma(m+1)+1$, and (iii) follows from applying (ii) to $\norm{Q^{\star}-Q_t}$ for $t=k-1, k-2, \dots, \lfloor m \rfloor+2$ iteratively. Then \eqref{eq:pf5} follows from the definition of $\tilde{h}$.
\end{proof}

\begin{lemma}\label{lem:MHAineq}
(Maximal Hoeffding-Azuma Inequality) \citep[Chapter 7]{Alon2008}\\ Let $\{M_1,M_2,\dots,M_T\}$ be a martingale difference sequence with respect to a sequence of random variables $\{X_1,X_2,\dots, X_T\}$ (i.e. $\mathbb{E}(M_{k+1}|X_1,X_2,\dots,X_k)=0,\forall 1\leq k\leq T$) and uniformly bounded by $\bar M>0$ almost surely. If we define $S_k=\sum_{i=1}^k M_i$, then for any $\varepsilon>0$, we have
$$ \mathbb{P}\left( \underset{1\leq k\leq T}{\max}S_k>\varepsilon \right)\leq \exp\left( \frac{-\varepsilon^2}{2T\bar M^2} \right).
$$
\end{lemma}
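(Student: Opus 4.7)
The plan is to derive the maximal Hoeffding–Azuma inequality from three classical ingredients: an exponential change of variables (the Chernoff trick), Doob's maximal inequality for non-negative submartingales, and Hoeffding's lemma applied conditionally to each bounded martingale difference $M_i$. Let $\mathcal{F}_k=\sigma(X_1,\dots,X_k)$ denote the filtration, so $\mathbb{E}[M_{k+1}\mid\mathcal{F}_k]=0$ and $|M_k|\le \bar M$ a.s.

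First, for any $\lambda>0$, I would rewrite the event of interest as
\[
\Big\{\max_{1\le k\le T} S_k > \varepsilon\Big\} = \Big\{\max_{1\le k\le T} e^{\lambda S_k} > e^{\lambda \varepsilon}\Big\},
\]
so it suffices to control the running maximum of the non-negative process $Y_k:=e^{\lambda S_k}$. Because the exponential is convex and $S_k$ is a martingale, Jensen's inequality applied conditionally gives
\[
\mathbb{E}[Y_{k+1}\mid\mathcal{F}_k] = Y_k\,\mathbb{E}[e^{\lambda M_{k+1}}\mid\mathcal{F}_k] \ge Y_k\,e^{\lambda \mathbb{E}[M_{k+1}\mid\mathcal{F}_k]}=Y_k,
\]
so $\{Y_k\}$ is a non-negative submartingale. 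Doob's maximal inequality then yields
\[
\mathbb{P}\Big(\max_{1\le k\le T} Y_k > e^{\lambda\varepsilon}\Big) \le e^{-\lambda\varepsilon}\,\mathbb{E}[Y_T] = e^{-\lambda\varepsilon}\,\mathbb{E}\big[e^{\lambda S_T}\big].
\]

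The second step is to control the moment generating function $\mathbb{E}[e^{\lambda S_T}]$ by iterating Hoeffding's lemma in conditional form. Since $\mathbb{E}[M_i\mid\mathcal{F}_{i-1}]=0$ and $M_i$ takes values in an interval of length at most $2\bar M$, Hoeffding's lemma gives
\[
\mathbb{E}\big[e^{\lambda M_i}\,\big|\,\mathcal{F}_{i-1}\big] \le \exp\!\big(\tfrac{\lambda^2(2\bar M)^2}{8}\big) = \exp\!\big(\tfrac{\lambda^2\bar M^2}{2}\big).
\]
Writing $e^{\lambda S_T}=e^{\lambda S_{T-1}} e^{\lambda M_T}$, taking conditional expectation with respect to $\mathcal{F}_{T-1}$, and iterating down to $k=1$ yields $\mathbb{E}[e^{\lambda S_T}]\le \exp(T\lambda^2\bar M^2/2)$. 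Combining this with the Doob bound gives
\[
\mathbb{P}\Big(\max_{1\le k\le T} S_k>\varepsilon\Big)\le \exp\!\Big(\tfrac{T\lambda^2\bar M^2}{2} - \lambda\varepsilon\Big).
\]

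The final step is the Chernoff optimization: minimize the right-hand side over $\lambda>0$. The quadratic exponent is minimized at $\lambda^\star=\varepsilon/(T\bar M^2)$, and substituting back produces the claimed bound $\exp(-\varepsilon^2/(2T\bar M^2))$. The only subtle point worth checking carefully is the conditional Hoeffding bound; specifically, one needs that a centered random variable supported in $[-\bar M,\bar M]$ satisfies the sub-Gaussian MGF inequality, which follows from convexity of $x\mapsto e^{\lambda x}$ together with the two-point representation that maximizes the MGF subject to the range and mean-zero constraint. Beyond that, the argument is a direct assembly of submartingale convergence of $\{e^{\lambda S_k}\}$, Doob's inequality, and the Chernoff optimization.
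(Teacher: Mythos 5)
Your proof is correct: the Chernoff change of variables, the observation that $e^{\lambda S_k}$ is a non-negative submartingale, Doob's maximal inequality, the conditional Hoeffding lemma with range $2\bar M$, and the optimization at $\lambda^\star=\varepsilon/(T\bar M^2)$ all fit together exactly as claimed and yield the stated bound. The paper itself does not prove this lemma --- it is quoted from Alon and Spencer --- and your argument is precisely the standard proof given in that reference, so there is nothing to reconcile.
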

Now we are ready to prove the main results of Theorem~\ref{thm:main}. 

\begin{proof}[Proof of Theorem~\ref{thm:main}] 
The proof applies Lemma~\ref{lem:errorProp} and the Maximal Hoeffding-Azuma Inequality (Lemma~\ref{lem:MHAineq}).

 Applying Lemma~\ref{lem:errorProp} with $k=T$, we obtain
$$
    \left\Vert Q^{\star}-Q_{T}\right\Vert\leq \frac{\tilde{h}V_{\max}}{T(1-\gamma)}+\frac{1}{T}\sum_{i=0}^{T-\lfloor m \rfloor-2}\gamma^{i}\Vert E_{T-i}\Vert.
$$
It suffices to bound the second term. For convenience, we denote $K=T-\lfloor m \rfloor-2$. Observe that 
\begin{equation}
    \frac{1}{T}\sum_{i=0}^{K}\gamma^{i}\Vert E_{T-i}\Vert
    \leq\frac{1}{T}\sum_{i=0}^{K}\gamma^{i}\underset{0\leq i\leq K}{\max}\norm{E_{T-i}}\leq \frac{\max_{0\leq i\leq K}\norm{E_{T-i}}}{(1-\gamma)T}. \label{eq:pf3}
\end{equation}

In remains to bound $\underset{0\leq i\leq K}{\max}\norm{E_{T-i}}$. Notice that $\underset{0\leq i\leq K}{\max}\norm{E_{T-i}}=\underset{(x,u)}{\max} \underset{0\leq i\leq K}{\max}\lvert{E_{T-i}(x,u)}\rvert$. For a given 
$(x,u)$ and $\varepsilon>0$, we have
\begin{align}
    &\mathbb{P}\left( \underset{0\leq i\leq K}{\max}\lvert E_{T-i}(x,u)\rvert>\varepsilon \right)\nonumber\\
    &\quad=\mathbb{P}\left( \left\{\underset{0\leq i\leq K}{\max}( E_{T-i}(x,u)) >\varepsilon\right\} \bigcup \left\{\underset{0\leq i\leq K}{\max}( -E_{T-i}(x,u)) >\varepsilon\right\} \right)\nonumber\\
    &\quad=\mathbb{P}\left( \underset{0\leq i\leq K}{\max}( E_{T-i}(x,u))>\varepsilon \right) + \mathbb{P}\left( \underset{0\leq i\leq K}{\max}( -E_{T-i}(x,u))>\varepsilon \right), \label{eq:pf1}
\end{align}
where $\bar{D}$ is specified in Proposition~\ref{lem:boundDk}.
Since $\{\epsilon_k(x,u)\}_{k\geq 0}$ is a martingale difference sequence with respect to the filtration $\mathcal{F}_k$ as defined previously, we apply the Maximal Hoeffding-Azuma inequality (see Lemma~\ref{lem:MHAineq}) and obtain
\begin{equation*}
\mathbb{P}\left( \underset{0\leq i\leq K}{\max}( E_{T-i}(x,u))>\varepsilon\right) \leq \exp\left( \frac{-\varepsilon^2}{8(K+1)\bar{D}^2} \right),
\end{equation*}
and
\begin{equation*}
    \mathbb{P}\left( \underset{0\leq i\leq K}{\max}( -E_{T-i}(x,u))>\varepsilon\right) \leq \exp\left( \frac{-\varepsilon^2}{8(K+1)\bar{D}^2} \right).
\end{equation*}
Then we further bound~\eqref{eq:pf1} as
$$ \mathbb{P}\left( \underset{0\leq i\leq K}{\max}\lvert E_{T-i}(x,u)\rvert>\varepsilon \right)\leq 2\exp\left( \frac{-\varepsilon^2}{8(K+1)\bar{D}^2} \right).
$$
Since we consider a finite state-action space where the number of state-action pairs is defined by $n$, we use the union bound to obtain
$$ \mathbb{P}\left( \underset{0\leq i\leq K}{\max}\lVert E_{T-i}\rVert>\varepsilon \right)\leq 2n\exp\left( \frac{-\varepsilon^2}{8(K+1)\bar{D}^2} \right).
$$
Letting $\delta = 2n\exp\left( \frac{-\varepsilon^2}{8(K+1)\bar{D}^2} \right)$, and we have
$$ \mathbb{P}\left( \underset{0\leq i\leq K}{\max}\lVert E_{T-i}\rVert\leq \bar{D}\sqrt{8(K+1)\log \frac{2n}{\delta}} \right)\geq 1-\delta,
$$
where $K=T-\lfloor m \rfloor-2$.
By substituting the above bound into~\eqref{eq:pf3} yields the desired result.
\end{proof}

\vskip 0.2in
\bibliography{jmlr}

\end{document}